\newtheorem{conjecture}{Conjecture}[section]
\def\eqref#1{equation~\ref{#1}}
\def\1{\bm{1}}
\DeclareMathAlphabet{\mathsfit}{\encodingdefault}{\sfdefault}{m}{sl}
\SetMathAlphabet{\mathsfit}{bold}{\encodingdefault}{\sfdefault}{bx}{n}
\newcommand{\E}{\mathbb{E}}
\newcommand{\R}{\mathbb{R}}
\newcommand{\maj}{\mathrm{maj}}
\DeclareMathOperator*{\argmax}{arg\,max}
\DeclareMathOperator*{\supp}{supp}
\DeclareMathOperator{\sign}{sign}
\DeclareMathOperator{\sgn}{sgn}
\DeclarePairedDelimiterX{\expectarg}[1]{[}{]}{%
  \ifnum\currentgrouptype=16 \else\begingroup\fi
  \activatebar#1
  \ifnum\currentgrouptype=16 \else\endgroup\fi
}
\newcommand{\innermid}{\nonscript\;\delimsize\vert\nonscript\;}
\newcommand{\activatebar}{%
  \begingroup\lccode`\~=`\|
  \lowercase{\endgroup\let~}\innermid 
  \mathcode`|=\string"8000
}
\colorlet{todocolor}{red!50!black}
\colorlet{todobg}{todocolor!30}
\newtcbox{\todobox}{on line, colback=todobg, colframe=todocolor, size=fbox, boxsep=2pt, fontupper=\sffamily\footnotesize, title={[TODO]}, fonttitle=\scriptsize, attach title to upper={~~}, coltitle=todocolor}
\newtcolorbox{todopara}{colback=todobg, colframe=todocolor, size=small, fontupper=\sffamily\footnotesize, title={[TODO]}, fonttitle=\scriptsize, attach title to upper={~~}, coltitle=todocolor}
\theoremstyle{plain}
\newtheorem{theorem}{Theorem}[section]
\newtheorem{lemma}[theorem]{Lemma}
\theoremstyle{definition}
\newtheorem{definition}[theorem]{Definition}
\theoremstyle{remark}
\title{Coin-Flipping Neural Networks}
\begin{document}

\twocolumn[
\icmltitle{Coin-Flipping Neural Networks}


\icmlauthor{Yuval Sieradzki}{technion}
\icmlauthor{Nitzan Hodos}{technion}
\icmlauthor{Gal Yehuda}{technion}
\icmlauthor{Assaf Schuster}{technion}

\icmlaffiliation{technion}{Department of Computer Science, Technion - Israel Institute of Technology, Haifa, Israel}

\icmlcorrespondingauthor{Yuval Sieradzki}{syuvsier@campus.technion.ac.il}
\icmlcorrespondingauthor{Nitzan Hodos}{hodosnitzan@campus.technion.ac.il}
\icmlcorrespondingauthor{Assaf Schuster}{assaf@technion.ac.il}

\icmlkeywords{Machine Learning, ICML}

\vskip 0.3in
]



\printAffiliationsAndNotice{}  

\begin{abstract}
We show that neural networks with access to randomness can outperform deterministic networks by using amplification. We call such networks Coin-Flipping Neural Networks, or CFNNs.
We show that a CFNN can approximate the indicator of a $d$-dimensional ball to arbitrary accuracy with only 2 layers and $\mathcal{O}(1)$ neurons, where a 2-layer deterministic network was shown to require $\Omega(e^d)$ neurons, an exponential improvement \cite{DBLP:journals/corr/SafranS16}.
We prove a highly non-trivial result, that for almost any classification problem, there exists a trivially simple network that solves it given a sufficiently powerful generator for the network's weights.
Combining these results we conjecture that for most classification problems, there is a CFNN which solves them with higher accuracy or fewer neurons than any deterministic network.
Finally, we verify our proofs experimentally using novel CFNN architectures on CIFAR10 and CIFAR100, reaching an improvement of 9.25\% from the baseline.

\end{abstract}

\section{Introduction}

A fundamental question in computer science is whether randomness can be used as a resource: can an algorithm solve a problem using less time or memory when given the option to flip coins?
While in general this problem has not yet been solved, there are examples of problems for which the answer was shown to be positive.
For example, using randomness, the volume of a convex body in an $n$ dimensional Euclidean space can be approximated to an arbitrary factor in polynomial time \cite{dyer1991random}, but in deterministic polynomial time it is not possible to approximate the volume of a convex body within even a polynomial factor \cite{barany1987computing}.

One important feature of randomized algorithms is \emph{amplification}, where by sampling multiple times from the algorithm and aggregating the results we can increase the probability of success.
For example, for a ground-truth classification function $f:\R\to\{-1,1\}$, assume we are given a randomized classifier $h:\R\to\{-1,1\}$ with an associated probability of success $\Pr[h(x_0)=f(x_0)]=\frac{2}{3}$ for some query point $x_0\in\R$.
We can amplify the probability of correctly guessing $f(x_0)$ by sampling $n$ times from $h(x_0)$ and taking a simple majority; using the well known Hoeffding Inequality we can see that the probability of correct classification is proportional to $1-e^{-n}$, a significant improvement over $\frac{2}{3}$.

The power of amplification for randomized algorithms motivates us to apply it in the context of neural networks. In this work we show that neural networks with access to random values can outperform deterministic networks, even to an exponential factor, given we are allowed to amplify their outputs.
We call this approach Coin-Flipping Neural Networks, and study it in the context of classification problems.

\begin{table}[t]
\caption{\small Accuracy measured for CFNN architectures and their deterministic equivalents, as explained in Section \ref{section:experiments}. The Hypernetwork is comprised of a network $G$ which learns a parameter distribution for a network $N$. ResNet20 was used with Dropout as its source of randomness. Both models are random during inference.
CFNN networks consistently achieve improved accuracy on both CIFAR10 and CIFAR100 datasets.}
\label{tbl:summary}
\vskip 0.15in
\begin{center}
\begin{small}
\begin{sc}
\begin{tabular}{lcccr} 
\toprule
Arch. & Det. & CFNN & $\Delta$ \\
\midrule
\multicolumn{4}{c}{CIFAR10} \\
\midrule

Hypernet & 79.71 $\pm$ 0.5 & 81.45 $\pm$ 0.2 & +1.74$\pm$ 0.6 \\
ResNet20 & 87.82 $\pm$ 0.6 & 90.49 $\pm$ 0.1 & \textbf{+2.67 $\pm$ 0.7} \\
\midrule
\multicolumn{4}{c}{CIFAR100} \\
\midrule
Hypernet & 58.09 $\pm$ 0.2 & 61.24$\pm$ 0.2 & +3.15 $\pm$0.3\\
ResNet20 & 50.90 $\pm$ 0.1 & 60.16 $\pm$ 0.6 & \textbf{+9.25 $\pm$ 0.7} \\
\bottomrule
\end{tabular}
\end{sc}
\end{small}
\end{center}
\vskip -0.1in
\end{table}

\subsection{Related Work}
\label{related_work}
Randomness has been used in deep learning in various contexts.
As described in a survey by \cite{gallicchio2017randomized}, randomness is used for data splitting, data generation, observation order, hyperparameters optimization, and structural random elements.
We address these methods and others in the following paragraphs.
First, we note that amplification by sampling multiple times from a network is rarely used. In fact, most methods aim to reduce the number of samples required from a computational efficiency standpoint.
In this work we argue that sampling multiple times can be beneficial to model accuracy, which is a significant deviation from common practice \cite{DBLP:journals/corr/abs-1906-02691} \cite{doersch2021tutorial}.

\paragraph{Randomness During Initialization}
There is a large number of machine learning algorithms that use randomness to construct their model. From classic algorithms like Random Forest and Random Ensemble Learning, that use random subsets of the input, to more complex models such as RVFL, Randomized Neural Networks and Reservoir Computing, which use a set of randomly sampled function as an expanded input to a learned classifier \cite{DBLP:journals/corr/abs-2002-12287}, \cite{DBLP:journals/corr/abs-1803-03635}.
These algorithms randomly generate a deterministic function; hence, they are not CFNNs.
Another important example are Ensemble methods such as \cite{DBLP:journals/corr/abs-1904-05488} \cite{NEURIPS2020_b86e8d03}, where
multiple sub-networks are trained on randomized subsets of the training data or using regularization to diversify their outputs. These models are also deterministic after initialization, and can be viewed as approximations to CFNNs.

\paragraph{Structural Randomness}
These networks utilize randomness as a part of their computation model. In Spiking Neural Networks \cite{spiking_neural_network}, neurons output timed pulses, modelled after the behavior of biological neurons. These pulses are usually described as a random process, e.g. a Poisson process.
In Restricted Boltzmann Machines and other Energy Based Models, the annealing process is stochastic and is modelled after the cooling process of a crystal lattice (\cite{boltzmann_machines1}, \cite{boltzmann_machines2}). Another example is Stochastic Computation \cite{universal_scnn}.
Such methods don't strictly fall under the CFNN definition given in this paper, which deals with standard DNNs with additional randomization.

\paragraph{Stochastic neural networks}
These are networks for which a neuron's output is a sample from a distribution parameterized by the inputs to the neuron, e.g. \cite{stochastic_tang}, \cite{Neal90learningstochastic}.
These models are examples of networks which have access to randomness. However, amplification is not used by them. 
Also, a stochastic neural network is usually used to allow better data modelling, e.g. \cite{de2019stochastic} which aims to learn a model for a map between distributions (e.g. point cloud data). CFNNs do not attempt to better calculate the distribution of the data, but find a distribution that solves a problem given the data. That distribution might be very different from the actual data distribution.

\paragraph{Confidence Estimation}
A common use of randomness in deep learning is to calculate confidence estimates on the output of NNs. For example, Bayesian NNs [\cite{bayesian_hands_on},\cite{bayesian_tutorial}] are networks whose parameters (or activations) are modeled as probability distributions conditioned on the training data. 
Given a prior over the parameters, Bayes' theorem is applied to get the distribution of the weights.
Another example of confidence estimation is MCDropout, which uses the common Dropout \cite{14original_dropout_paper} during inference by averaging the results of multiple samples from the network's output \cite{gal2016mc-dropout}. The variance is then used to estimate confidence for the output.
CFNNs focus on applying randomness in order to improve accuracy, which is markedly different from the focus of MCDropout and Bayesian NNs.
Confidence estimates can be calculated for CFNNs, but we are not interested in this in the context of this paper.
Another important difference to CFNNs is that Bayesian NNs require the specification of a prior on the parameters of the network. CFNNs don't require such explicitness. Inductive biases in the network's design and the training used can be reformulated as priors; however, in most cases this is difficult to do explicitly.
Also note that we have experimented with MCDropout as a CFNN which improved its accuracy by up to 9.2\% from our baseline. See Section \ref{section:experiments}, Table \ref{tbl:dropout_results}.

\paragraph{Generative Models}
An important class of networks that employ randomness are generators. For example GAN models such as StyleGAN \cite{DBLP:journals/corr/abs-2106-12423}, InfoGAN \cite{DBLP:journals/corr/ChenDHSSA16} and Parallel Wave GAN \cite{yamamoto2020parallel} have proven to be excellent generators of images and audio. Variational Autoencoders \cite{kingma2014autoencoding} \cite{DBLP:journals/corr/abs-1906-02691} \cite{doersch2021tutorial} learn a distribution over a latent space and decode samples from it to generate images with high fidelity. Denoising Diffusion Probabilistic Models \cite{DBLP:journals/corr/abs-2006-11239} model an iterative noising process and learn an inverse operation which allows to sample images directly from noise.
These examples show that neural networks can be trained to learn complex distributions, which could then be used as building blocks for CFNNs. However, except for DDPMs, these methods do not use amplification.
DDPMs can be viewed as using amplification, but they are not a general framework for amplification in NN whereas CFNNs are.

Finally, In \cite{dwaracherla} it was shown that networks with randomness can be used as more effective agents in a reinforcement learning setting. We consider this work an example of CFNNs, since its explicit motivation was to improve the network's performance on a task by utilizing randomness. However they also show a theorem which questions the need for complex use of randomness in the first place. We address this question in Section \ref{section:theorem}.

\subsection{Our Contribution}
In this work we show that by using amplification, CFNNs  can improve accuracy and reduce space complexity (number of neurons) compared to deterministic networks on classification tasks .

We give an example to such a task, the classification of a $d$-dimensional ball around the origin (Section \ref{thm:nd_ball}). Deterministic networks with 2 layers have been shown to require $\Omega(e^d)$ neurons, or $\mathcal{O}(d/\epsilon)$ neurons with 3 layers, in order to solve the task for some approximation error $\epsilon$ \cite{DBLP:journals/corr/SafranS16}. Our CFNN construction requires only $\mathcal{O}(1)$ neurons.
Even when accounting for multiple samples, our CFNN is an exponential improvement over the 2-layer deterministic network in terms of computational complexity.

We prove a theorem dealing with hypernetwork models (Section \ref{section:theorem}), which are comprised of a base network $N(x;w)$ with some parameters sampled by a generator network $w=G(r)$, using a random input $r\in\R^m$. We show that for almost any classification function $f:\R\to\{-1,1\}$, a linear layer $N(x;w)=w\cdot x$ is enough to solve it to arbitrary accuracy, provided $G$ is complex enough.

Using these results, along with theorem 1 from \cite{dwaracherla}, we describe a tradeoff between data complexity and random complexity of possible solutions for a given problem (see Section \ref{section:complexity_continuum} for exact definitions). We conjecture that for any task, there exists a CFNN model that is better (higher accuracy, fewer neurons) than a purely deterministic network (Figure \ref{fig:complexity_continuum}).

Finally, we show experimental evidence for the benefits of CFNNs on CIFAR10 and CIFAR100 datasets, by training a hypernetwork architecture inspired by our proof as well as using Dropout as a part of a CFNN network (Section \ref{section:experiments}). Our networks achieve improvements over the baseline of up to 3.15\% for hypernetworks and 9.2\% for dropout models, see Table \ref{tbl:summary}.

\section{Coin-Flipping Neural Networks}
\label{discussion}

Let $N:\R^d\times \R^m\to[C]$ be a function of two inputs, a data input $x\in\R^d$ and a random input $r\in \R^m$, as well as some possible parameters $w$. Here $C\in\mathbb{N}$ is the number of classes in some classification task and $[C]=\{1,...,C\}$. The function $N$ is implemented as a neural network, with $x$ as its input, where the random value $r$ can be used in several ways, such as: additional random inputs, $N(x,r;w)$; random variables as parameters, $N(x;r,w)$; both, $N(x,r_1;r_2,w)$; and as a part of a hypernetwork, where a generator network $G$ takes a random variable as input and generates parameters for a base network: $N(x;G(r))$.
We will simply write $N(x)$ or $N(x,r)$ for the remainder of the paper.

Since $N(x)$ is a random variable, it has an associated probability function $\mathbf{p}^N:\R^d\to\R^C$. Its $j$-th element is $\mathbf{p}^N_j(x)=\Pr\left[N(x)=j\mid x\right]$. 
We will write $\mathbf{p}(x)$ or $\mathbf{p}$ for the rest of the paper, and use the notation $\mathbf{p}_{f(x)}(x)=\Pr\left[N(x)=f(x)\mid x\right]$ for some function $f:\R^d\to[C]$.

We call $N$ a \emph{Coin-Flipping Neural Network}, or \emph{CFNN}, if it uses amplification during inference.

\subsection{Amplification}
\emph{Amplification} is a method used in randomized algorithms to improve the probability of success, by taking $n$ samples of the algorithm's output and aggregating them using a deterministic function to generate the final result with a higher probability of success. 

The simplest method of aggregating $n$ classification samples is by taking their majority. Under this amplification scheme, the CFNN model is a stochastic classifier with an input-dependent probability function, $\mathbf{p}(x)$.
We can now define:

\begin{definition} (Random Accuracy) \label{def:random_acc}
    Let $\R^d$ be an input space, $\mu$ a distribution on that space, and $C\in\mathbb{N}$.
    Let $f:\R^d\to [C]$ be some ground-truth classification function over $\R^d$. Let $\zeta$ be some distribution over a random input space $R$ and let $n$ be a number of I.I.D samples $r_i\sim\zeta$.
    Given a CFNN $N(x,r)$ with probability function $\mathbf{p}(x)$, the \textbf{\emph{random accuracy}} of $N$ is:
    \begin{align*}
        RA &= \E_{\mu,\zeta}\left[\lim_{n \to \infty} \mathrm{maj}\left\{N(x,r_i)\right\}_{i=1}^n = f(x)\right] \\
        &= \E_{\mu} \left[\argmax_j\left(\mathbf{p}_j(x)\right) = f(x)\right]
    \end{align*}
    Namely, it is the accuracy of the majority taken over an infinite number of samples from $N$. When $RA=1$, we say $N$ \textbf{\emph{classifies $f$ in probability}}.
\end{definition}

Since $N(x)\sim \text{Cat}(C,\mathbf{p})$, the majority approaches $\argmax_j(\mathbf{p}_j)$ with probability 1 as $n\to\infty$.
In experiments we'll use the empirical estimate: 
$\widehat{RA}=\frac{1}{\abs{S}}\sum_{x\in S} \left[\argmax_j\left(\hat{\mathbf{p}}_j(x)\right)=f(x)\right]$, where $n$ is a finite number of samples, $S$ is the dataset of inputs sampled from $\mu$, and $\hat{\mathbf{p}}$ is an empirical estimate of $\mathbf{p}$.

In this paper we claim that by using amplification, a CFNN can have much greater random accuracy than deterministic networks.
Informally, random accuracy measures the ratio of inputs for which a CFNN's likeliest output, i.e. $\argmax_j(\mathbf{p}_j(x))$, is the correct one.
This implies that the network is allowed to make many mistakes, as long as most of the time it is correct.
By using amplification, we could then recover the correct output as the majority of a large enough number of samples.

In our view, this removes a constraint on the search space of deterministic networks, which are forced to always give the same output by definition.
Instead, during training we explicitly allow our networks to be wrong sometimes. The only constraint is placed on the aggregate of several samples, rather than on individual samples themselves. This then allows the network to explore more complex, and random, decision boundaries on the input space.

\section{Classification of a $d$-Dimensional Ball}
\label{thm:nd_ball}
Let $f_R:\R^d\to\{-1,1\},f_R(x)=\sgn({\norm{x}_2 - R})$ be the indicator function for an $d$-dimensional ball of radius $R$.
In \cite{DBLP:journals/corr/SafranS16} it was shown that $f_R$ cannot be approximated by a neural network of depth 2 to a better accuracy than $\mathcal{O}(1/d^4)$ unless its width is $\Omega(e^d)$, but can be approximated to accuracy $\epsilon$ by a neural network of depth 3 that has $\mathcal{O}(d/\epsilon)$ neurons.

We will now show that using a CFNN, we can classify $f_R$ in probability with a 2-layer network whose number of neurons is only $\mathcal{O}(1)$.

This problem is not linearly separable, i.e. there is no deterministic linear classifier which classifies $f_R$.
However, we can find a distribution of linear classifiers which classifies $f_R$ in probability:
\begin{theorem} \label{thm:dball_classification}
    $\exists \gamma$ a distribution of linear classifiers $h:\R^d\to \{-1,1\}$ such that $h$ classifies $f_R$ in probability.
\end{theorem}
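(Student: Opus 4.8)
The plan is to reduce the statement to a pointwise condition on the class probabilities and then exhibit an explicit rotation-invariant distribution $\gamma$ that meets it. Since $f_R$ takes only two values, the infinite-sample majority at a point $x$ equals $+1$ exactly when $\mathbf{p}_{+1}(x) > \tfrac12$, so (because the sphere $\norm{x}_2 = R$ has $\mu$-measure zero) it suffices to build $\gamma$ with $\mathbf{p}_{+1}(x) > \tfrac12$ for every $x$ with $\norm{x}_2 > R$ and $\mathbf{p}_{+1}(x) < \tfrac12$ for every $x$ with $\norm{x}_2 < R$. I will look for $\gamma$ invariant under rotations, which forces $\mathbf{p}_{+1}(x)$ to depend only on $\rho := \norm{x}_2$; the task then becomes designing a single increasing profile that crosses $\tfrac12$ precisely at $\rho = R$.

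The construction I propose is a two-component mixture. With probability $1-\alpha$ I draw a random tangent-type classifier $h(x) = \sgn(u\cdot x - t_0)$, where $u\sim\mathrm{Unif}(S^{d-1})$ and $t_0\in(0,R)$ is a fixed offset, and with probability $\alpha$ I use the trivial classifier that always outputs $+1$ (a degenerate linear map, or equivalently a hyperplane pushed far enough away to be constant on the region of interest). Writing $g(c) = \Pr_{u}[u\cdot\hat x > c]$ for the tail of the projection of a uniform unit vector onto $\hat x = x/\rho$ (whose law does not depend on $\hat x$), a direct computation gives $\mathbf{p}_{+1}(x) = \alpha + (1-\alpha)\,g(t_0/\rho)$, which indeed depends only on $\rho$. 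I would then fix $\alpha$ by forcing equality at the boundary, i.e. $\alpha + (1-\alpha)g(t_0/R) = \tfrac12$, which solves to $\alpha = \frac{1/2 - g(t_0/R)}{1 - g(t_0/R)} \in (0,\tfrac12)$ because $g(t_0/R) < g(0) = \tfrac12$ whenever $t_0 > 0$.

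To finish, I would record the two elementary facts about $g$: it is continuous and strictly decreasing on $(-1,1)$ (the projection $u\cdot\hat x$ has a density that is positive there and symmetric about $0$, so $g(0) = \tfrac12$), and it is constant outside that range. Since $\rho\mapsto t_0/\rho$ is decreasing, $\rho\mapsto g(t_0/\rho)$ is increasing, hence $\mathbf{p}_{+1}(x)$ is a nondecreasing function of $\rho$ that equals $\tfrac12$ exactly at $\rho = R$; therefore $\mathbf{p}_{+1}(x) > \tfrac12$ for $\rho > R$ and $\mathbf{p}_{+1}(x) < \tfrac12$ for $\rho < R$ (using $\alpha < \tfrac12$ to cover the innermost shell $\rho \le t_0$, where $g \equiv 0$). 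By Definition~\ref{def:random_acc} this gives $RA = 1$, i.e. $h$ classifies $f_R$ in probability.

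The step I expect to be the real obstacle is getting the exterior of the ball right, and it is what motivates the whole design. Random hyperplanes tangent to the sphere classify every interior point correctly with probability one, but they fail outside: for $\rho$ slightly larger than $R$ the projection $u\cdot\hat x$ is symmetric about $0$, so $\Pr[u\cdot x > R] < \tfrac12$ and the majority wrongly calls such points interior. More fundamentally, any nondegenerate linear classifier assigns opposite signs to two antipodal points of large norm, so a distribution supported only on genuine hyperplanes has expected vote tending to a tie at infinity and cannot make both far directions read $+1$. Overcoming this is exactly the role of the constant $+1$ component: it supplies a uniform bias that lifts the tie above $\tfrac12$ while the tangent family still pulls the interior below $\tfrac12$, and the single free parameter $\alpha$ lets me place the crossover exactly at $R$. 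If one prefers to avoid the degenerate classifier, the same argument goes through verbatim for any $\mu$ with bounded support by replacing the constant voter with a hyperplane placed beyond that support.
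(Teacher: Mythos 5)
Your construction is correct and is essentially the paper's own proof: both mix a rotation-invariant random halfspace at a fixed positive offset with a constant $+1$ classifier, so that the resulting radial probability profile is strictly increasing in $\norm{x}_2$ and is pinned to exactly $\tfrac{1}{2}$ at radius $R$; the only differences are cosmetic (you draw the normal uniformly from $S^{d-1}$ where the paper uses $u\sim\mathcal{N}(0,I_d)$, giving your $g(t_0/\rho)$ in place of the Gaussian tail $1-\Phi(b/\norm{x}_2)$, and you solve for the mixture weight given the offset rather than for the offset given the weight). One small caveat: your claim that the projection $u\cdot\hat{x}$ has a positive density on $(-1,1)$ requires $d\ge 2$; for $d=1$ your $g$ is a step function and the profile is flat for all $\rho>t_0$, whereas the Gaussian choice works in every dimension --- harmless in context, since the theorem is only interesting for large $d$, but worth noting if you want the statement verbatim for all $d$.
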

This distribution will then be implemented as a CFNN.
We have found two such distributions, and present the simpler one here. The other, which has a nice geometric interpretation, is presented in Appendix \ref{appendix:tangents_ball}.

\begin{proof}

A distribution of classifiers $h:\R^d\to\{-1,1\}$ classifies $f_R$ in probability if for $\norm{x}_2>R$, $\mathbf{p}^h_1(x)=\Pr[h(x)=1\mid x]>\frac{1}{2}$  and $\mathbf{p}^h_1(x)<\frac{1}{2}$ for $\norm{x}_2<R$.

Let $u\sim\mathcal{N}(0,I_d)$ be a vector sampled from the normal distribution on $\R^d$. 
Let $h(x;u,b) = \sgn(u^Tx-b)$ be a linear classifier with $b$ a constant parameter.
The probability of the classifier to output "1", i.e. classify a point as "outside" of the ball, is $\mathbf{p}^h_1(x) = \Pr\left[u^Tx>b\mid x\right] = 1 - \Phi(\frac{b}{\norm{x}_2})$, where $\Phi$ is the standard normal CDF.
If $b>0$, then $\frac{b}{\norm{x}_2} > 0$ and $\Phi(\frac{b}{\norm{x}_2}) > \frac{1}{2}$; hence $\mathbf{p}^h_1(x)<\frac{1}{2}$ for all $x$. If $b<0$, then $\Phi(\frac{b}{\norm{x}_2}) < \frac{1}{2}$ and $\mathbf{p}^h_1(x)>\frac{1}{2}$. Thus, $h$ cannot classify $f_R$ in probability.

\begin{figure}[t]
\vskip 0.2in
\begin{center}
\centerline{\includegraphics[width=0.9\linewidth]{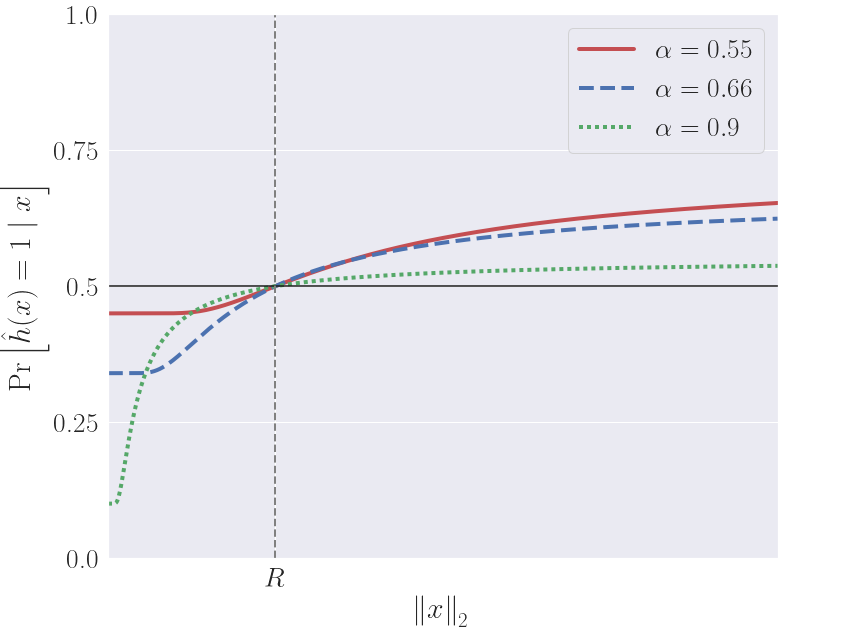}}
\caption{\small The probability function $\Pr\left[\hat{h}(x)=1\mid x\right]$ to classify a point as outside of the ball of radius $R$, displayed as a function of $\norm{x}_2$.
As can be seen, the probability function is greater than $1/2$ for all $\norm{x}_2>R$ and less than $1/2$ for all $\norm{x}_2<1/2$, i.e. $\hat{h}$ classifies the function $f_R(x)=\sign\left(\norm{x}_2-R\right)$ in probability. The graphs of three values of $\alpha$ are displayed.}
\label{fig:prob_h_dball}
\end{center}
\vskip -0.2in
\end{figure}

We can correct this with another variable $t\sim Ber(\alpha)$, with $\alpha>\frac{1}{2}$. If $t=1$, we will use $h$ above; if $t=0$, we will use a constant classifier (which is also linear) to output "1".
We thus define $\hat{h}(x;u,t, b) = \sgn(t (u\cdot x - b - 1) + 1)$; its probability to output "1" is $\mathbf{p}^{\hat{h}}_1(x)=(1-\alpha) + \alpha\mathbf{p}^h_1(x)=1-\alpha\Phi(\frac{b}{\norm{x}_2})$.
We want this probability to be $<\frac{1}{2}$ if $\norm{x}_2<R$ and $>\frac{1}{2}$ if $\norm{x}_2>R$. As $\Phi$ is continuous and strictly monotonic, this implies $1-\alpha\Phi(\frac{b}{R})=\frac{1}{2}$, which gives the parameter value $b=R\cdot\Phi^{-1}\left(\frac{1}{2\alpha}\right)$. 
Since for all values of $(u,t)$, $\hat{h}$ is a linear classifier as a function of $x$, we proved that $\hat{h}$ is a random linear classifier that classifies $f_R$ in probability.
\end{proof}

Note $\hat{h}$ can be implemented as a 2-layer CFNN, with layer 1 as $o_1(x;u,b) = u\cdot x - b - 1$ and layer two as $o_2(o_1;t) = \sgn(t \cdot o_1 + 1)$.
The number of neurons in this network is 2, one for each layer.

\paragraph{Computational Complexity} We have shown a CFNN with $\mathcal{O}(1)$ neurons, which is an exponential improvement in space complexity w.r.t deterministic networks.
In terms of computational complexity, which in our context is counted as the number of times a neuron performs a computation, the improvement is still exponential. As this discussion is very nuanced, we delay it to Appendix \ref{appendix:comp_safran}, but bring the main results here.

In \cite{DBLP:journals/corr/SafranS16} the accuracy metric used was MSE: $\int_{\R^d}\left(f_R(x)-\hat{h}(x)\right)^2\mu(x)dx = \norm{f_R-\hat{h}}_{L_2(\mu)}^2$, where $x\sim\mu$.
Our network is a random variable, so a natural extension is to calculate the MSE over the network's probability as well, $\norm{f_R-\hat{h}}_{L_2(\mu\times\gamma)}^2$, where $\gamma$ represents the distribution of network realizations $\hat{h}$.
We state the following theorem:
\begin{theorem} \label{thm:any_mu_exists_n}
    Let $\{\hat{h}_i\}_{i=1}^n$ be $n$ IID realizations of $\hat{h}$, and its majority $\maj\{\hat{h}_i\}$.
    For any input distribution $\mu$ with $\int_{\mathbb{S}^{d-1}(R)}d\mu=0$ and any $\epsilon>0$, $\exists n\in\mathbb{N}$ such that $\norm{f_R-\maj\{\hat{h}_i\}}^2_{L_2(\mu\times\gamma)}\leq\epsilon$.
\end{theorem}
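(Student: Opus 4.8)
The plan is to reduce the $L_2(\mu\times\gamma)$ error to an expected probability of majority misclassification and then control that probability with a concentration bound, splitting the input space into a thin shell around the decision boundary and its complement.

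First I would exploit that both $f_R$ and $\maj\{\hat h_i\}$ are valued in $\{-1,1\}$, so their squared difference is $4$ when they disagree and $0$ otherwise. Integrating over both the data randomness $\mu$ and the sampling randomness $\gamma$ gives
\[
\norm{f_R-\maj\{\hat h_i\}}^2_{L_2(\mu\times\gamma)} = 4\,\E_{x\sim\mu}\Big[\Pr_\gamma\big[\maj\{\hat h_i(x)\}\neq f_R(x)\big]\Big],
\]
so it suffices to force the expected per-point majority-error probability below $\epsilon/4$.

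Next, fix $x$ and let $p(x)$ be the probability that a single realization of $\hat h$ classifies $x$ correctly. From the explicit form $\mathbf{p}^{\hat h}_1(x)=1-\alpha\Phi(b/\norm{x}_2)$ established in the proof of Theorem~\ref{thm:dball_classification}, $p(x)$ depends only on $\norm{x}_2$, equals $\tfrac12$ exactly on $\mathbb{S}^{d-1}(R)$, and satisfies $p(x)>\tfrac12$ for all $x$ with $\norm{x}_2\neq R$; write $\delta(x)=p(x)-\tfrac12>0$ there. Taking $n$ odd (to avoid ties) and applying Hoeffding's inequality to the $n$ i.i.d.\ Bernoulli$(p(x))$ indicators of per-sample correctness yields
\[
\Pr_\gamma\big[\maj\{\hat h_i(x)\}\neq f_R(x)\big]\le \exp\!\big(-2n\,\delta(x)^2\big).
\]
The difficulty is that $\delta(x)\to 0$ as $\norm{x}_2\to R$, so this bound is not uniform and no single $n$ suffices for all $x$ simultaneously.

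I would resolve this with a shell/complement split, which is exactly where the hypothesis $\int_{\mathbb{S}^{d-1}(R)}d\mu=0$ enters. Let $A_\eta=\{x:\abs{\norm{x}_2-R}<\eta\}$. On $A_\eta$ I bound the error probability crudely by $1$, contributing at most $4\mu(A_\eta)$; since the sets $A_\eta$ decrease to $\mathbb{S}^{d-1}(R)$ as $\eta\to0$ and the sphere has $\mu$-measure zero, continuity of measure lets me choose $\eta$ with $4\mu(A_\eta)<\epsilon/2$. On the complement $A_\eta^{c}$, the monotonicity and continuity of $\delta$ as a function of $\norm{x}_2$ (immediate from strict monotonicity of $\Phi$) give a uniform lower bound $\delta(x)\ge\delta_0>0$, where $\delta_0$ is the smaller of the values $\delta$ takes at $\norm{x}_2=R-\eta$ and $\norm{x}_2=R+\eta$. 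The Hoeffding estimate then bounds the contribution of $A_\eta^c$ by $4\exp(-2n\delta_0^2)$, which I drive below $\epsilon/2$ by taking $n$ large. Summing the two regions gives the claim. The main obstacle is precisely this non-uniform concentration near the boundary; the measure-zero assumption is what renders the thin shell $A_\eta$ negligible and thereby decouples the choice of $\eta$ from the subsequent choice of $n$.
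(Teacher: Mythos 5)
Your proof is correct and follows essentially the same strategy as the paper's: split $\R^d$ into a thin spherical shell around $\mathbb{S}^{d-1}(R)$ whose $\mu$-mass is at most $\epsilon/2$ (using the measure-zero hypothesis), bound the majority-error probability by $1$ there, and apply Hoeffding uniformly on the complement via a uniform lower bound on the margin $\abs{\mathbf{p}^{\hat h}_1(x)-\tfrac12}$. Your version is in fact slightly more careful on two minor points the paper glosses over: the factor of $4$ in $(f_R-\maj\{\hat h_i\})^2=4\cdot[f_R\neq\maj\{\hat h_i\}]$ for $\{-1,1\}$-valued functions, and taking $n$ odd to rule out ties.
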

The proof is given in \ref{appendix:comp_safran}.
The theorem shows that the majority of $\hat{h}$ can approximate the $d$-dimensional ball to arbitrary accuracy even in the MSE case. The only caveat is that the distribution of inputs cannot assign any probability to the surface of the ball. Since we can arbitrarily assign any value to the surface of the ball without changing the definition of the original classification function, this restriction isn't significant.

Given Theorem \ref{thm:any_mu_exists_n} we also show that the computational complexity of $\maj\{\hat{h}_i\}$, i.e. the number of times we sample from the network times its size, is only $\Omega(poly(d))$. This requires a stronger restriction on $\mu$, specifically that it does not assign a probability greater than $\epsilon/2$ to an exponentially-thin spherical shell around the surface of the ball: $\int_{\norm{x}_2\in[R-z,R+z]}\mu(x)dx > \epsilon/2$ for $z=e^{-d/2}$.

Finally, we show that the specific input distribution used by \cite{DBLP:journals/corr/SafranS16}, $\nu$, satisfies this restriction, hence $\hat{h}$ exponentially improves the computational complexity in approximating the $d$-dimensional ball, compared with deterministic networks.

\section{CFNNs with Strong Generators}
\label{section:theorem}

\begin{figure*}[h]
\vskip 0.2in
\begin{center}
    \begin{subfigure}[b]{0.35\textwidth}
         \centerline{\includegraphics[width=5cm]{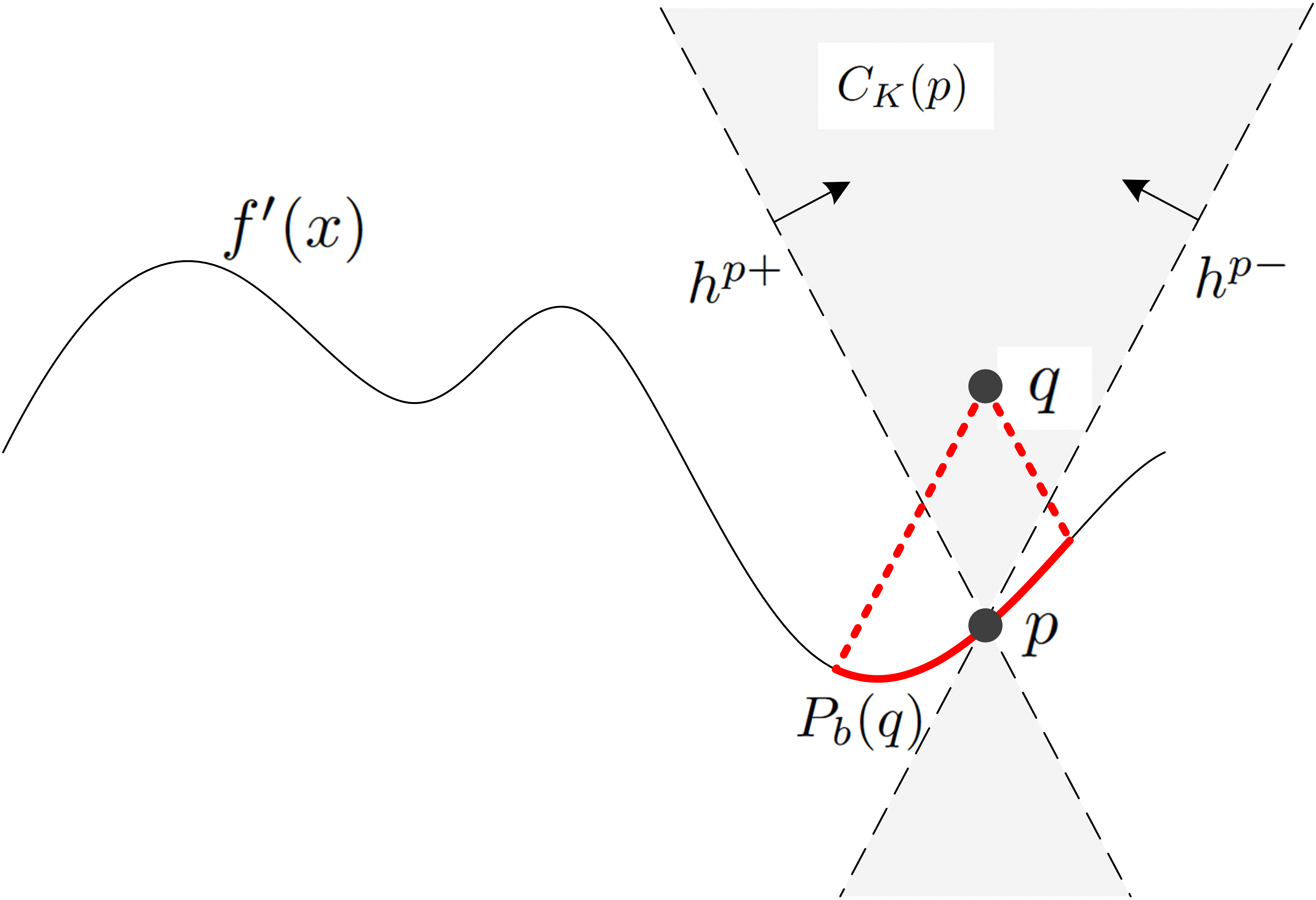}}
        \caption{}
        \label{fig:cone_viz:pb}
    \end{subfigure}
    \begin{subfigure}[b]{0.35\textwidth}
         \centerline{\includegraphics[width=5cm]{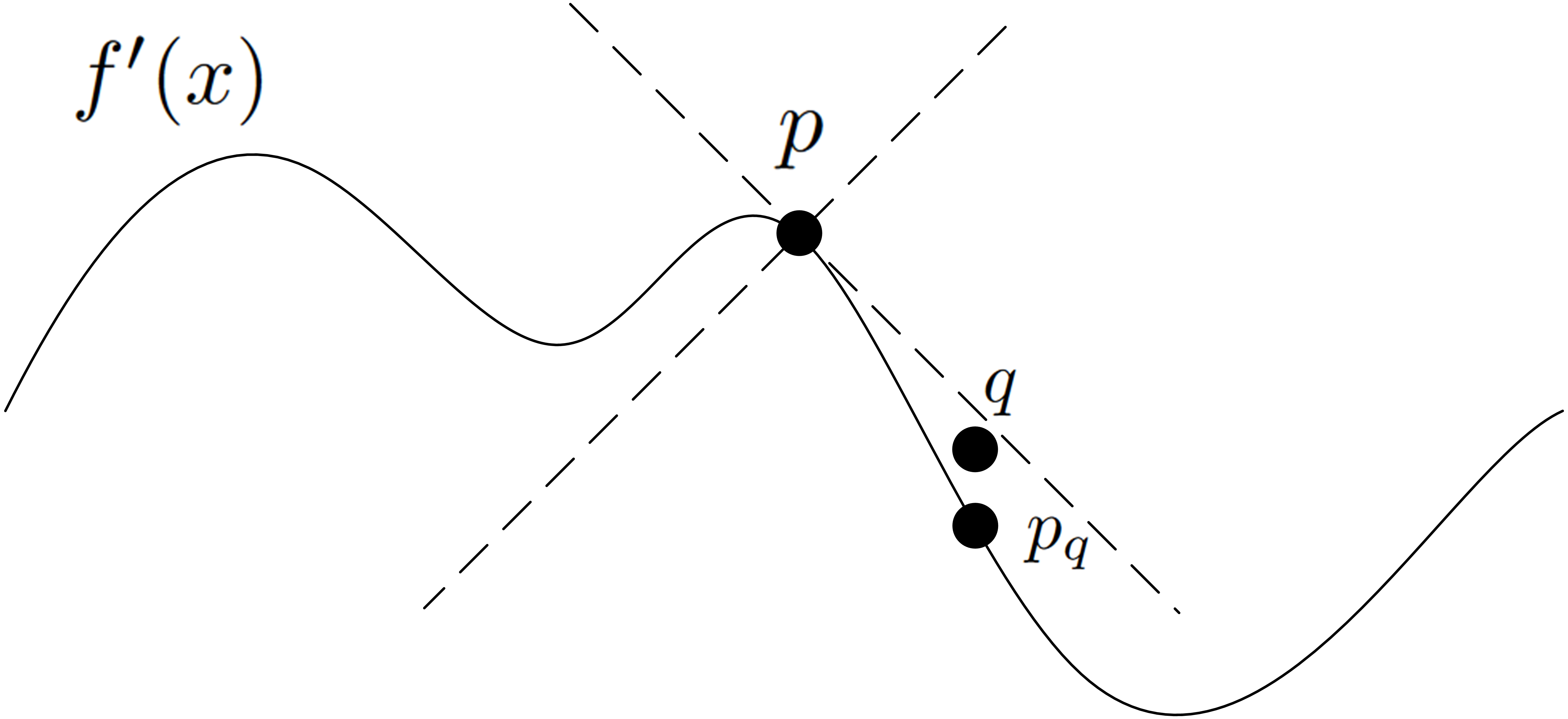}}
        \caption{}
        \label{fig:cone_viz:nonk}
    \end{subfigure}
    \caption{\small (a) Given a point $p$ on the graph of $f'(x)$ we construct two linear classifiers, $h^{p+}$ and $h^{p-}$ of slope $\pm K$. Their normals are illustrated to show which half plane is classified as "1". The set of points on which these two classifiers agree is a cone $C_K(p)$ whose origin is $p$. Given a query point $q$ above teh graph of $f'(x)$, the set of points on the graph whose cone $C_K(p)$ contains $q$ is marked in red and labeled $P_b(q)$. The classifiers $h^{p+},h^{p-}$ for these points agree that $q$ is above the graph. The points on the rest of the graph are the set $P_o(q)$. This means that by integrating over a distribution of points on the graph, points in $P_o(q)$ will contribute $\frac{1}{2}$ and points in $P_b(q)$ will contribute $1$ to the probability of correctly classifying $q$. (b) An illustration that no $p$ whose cone contains $q$, is above $q$. If for some point $p$ on the graph of $f'(x)$ the cone $C_K(p)$ contains $q$, and $p$ is above $q$, then $C_K(p)$ also contains $p_q$ a point on the graph of $f'$, is not K-Lipschitz.} 
    \label{fig:cone_viz}
\end{center}
\vskip -0.2in
\end{figure*}

\cite{dwaracherla} prove that a hypernetwork $N(x;G(r))$, where $G(r)=Ar$ is a linear generator of parameters for a deep base network $N$ with RELU activations, can approximate any distribution over functions.
In other words, they show that if we use a "strong" neural network, then a "weak", i.e. linear, generator is enough to estimate any function.
This leads the authors to ask whether hypernetworks with deeper generators are ever required.
We answer in the positive, by showing a complementary theorem, where the generator is strong and the base network is a linear classifier $h(x;w,b)=\sign(wx-b)$.
\begin{theorem}\label{thm:strong_gens}
    Given almost any function $f:\R\to\{-1,1\}$ and distribution $\zeta$ on $\R$ with $\supp\{f\}\subseteq\supp\{\zeta\}$, there is a network $G:\R\to\R^2$ such that $\forall x \in \R:\Pr_{r\sim \zeta}\left[h(x;G(r))=f(x) \mid x\right] > \frac{1}{2}$.
\end{theorem}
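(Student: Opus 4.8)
The plan is to reduce the theorem to sculpting a single scalar function and then realizing it as a pushforward of $\zeta$. Write $F(x) = \Pr_{r\sim\zeta}[h(x;G(r)) = 1 \mid x]$ for the probability that the random classifier labels $x$ as $+1$. The conclusion $\Pr[h(x;G(r)) = f(x)\mid x] > \tfrac{1}{2}$ is exactly the requirement that $\sgn(F(x) - \tfrac{1}{2}) = f(x)$ at every $x$ where $f$ is locally constant, so it suffices to build $G$ whose induced $F$ sits above $\tfrac{1}{2}$ on $\{f=1\}$ and below $\tfrac{1}{2}$ on $\{f=-1\}$. Because $x\in\R$, every realizable classifier $h(x;w,b)=\sgn(wx-b)$ is a signed threshold $h_{s,\theta}(x)=s\,\sgn(x-\theta)$ with $s=\sgn(w)$ and $\theta=b/w$; specifying $G$ is therefore the same as specifying the law that $\zeta$ induces on the pair $(s,\theta)\in\{-1,1\}\times\R$. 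Setting $(w,b)=(s,s\theta)$ recovers $h_{s,\theta}$, so I only need to produce a convenient target law on $(s,\theta)$ and then exhibit a map pushing $\zeta$ onto it.

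The key observation is that $F$ is easy to control. If the threshold $\theta$ has density $g$ and $\pi(\theta)=\Pr[s=1\mid\theta]$, then $F(x)=\int_{\theta<x}\pi g + \int_{\theta>x}(1-\pi)g$, whence $F(+\infty)=\E[\pi]$, $F(-\infty)=1-\E[\pi]$, and, differentiating, $F'(x)=(2\pi(x)-1)\,g(x)$. Thus by choosing $g$ and $\pi$ I can prescribe the derivative of $F$ to be any function dominated in absolute value by $g$. I would first build a continuous $F$ by hand, one that crosses $\tfrac{1}{2}$ transversally at each sign-change of $f$ and stays strictly on the correct side of $\tfrac{1}{2}$ on each maximal interval where $f$ is constant. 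Then I recover the distribution by picking any probability density $g$ supported near the boundary of $f$ with $g \ge |F'|$ pointwise and setting $\pi = \tfrac{1}{2}(1+F'/g)\in[0,1]$. This is where the hypotheses enter: \emph{almost any function} should be read as $f$ whose sign-change set is negligible and locally finite (equivalently, a decision boundary of bounded steepness), which is exactly the regularity that lets $|F'|$ be dominated by an integrable density --- the scalar analogue of the $K$-Lipschitz condition exploited by the cone construction of Figure~\ref{fig:cone_viz}.

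It remains to realize this $(s,\theta)$-law as a pushforward of the fixed source $\zeta$. Assuming $\zeta$ is nonatomic, I would compose its CDF to obtain a uniform variable, split the unit interval into the two orientation masses, and map each piece through the appropriate quantile function to reproduce $g$ and $\pi$; the assumption $\supp\{f\}\subseteq\supp\{\zeta\}$ guarantees that $\zeta$ carries mass everywhere a threshold must be placed, so the target law is genuinely reachable. This produces a measurable $G$; to upgrade it to a network I invoke the \emph{strong generator} premise that a sufficiently expressive $G$ approximates such maps, and the strict inequality $F(x)\neq\tfrac{1}{2}$ away from the measure-zero boundary supplies the margin that absorbs the approximation error.

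The step I expect to be the true obstacle is the regularity/realizability interface: making precise which $f$ count as \emph{almost any} so that (i) a bona fide probability density $g$ can dominate $|F'|$ everywhere (which fails when $f$ oscillates infinitely or has a fat boundary), and (ii) the pushforward-plus-approximation argument preserves the strict inequality uniformly on each level set without being spoiled in shrinking neighborhoods of the boundary. The cone-and-Lipschitz picture of Figure~\ref{fig:cone_viz} is essentially a geometric repackaging of exactly this condition, and I would fall back on it --- sampling a point $p$ on the boundary and one of the two slope-$\pm K$ classifiers uniformly, so that agreeing cones contribute probability $1$ and disagreeing ones contribute $\tfrac{1}{2}$ --- if the direct density-domination bookkeeping becomes unwieldy.
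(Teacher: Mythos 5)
Your proposal is sound but follows a genuinely different route from the paper. The paper never works with the success probability $F(x)$ directly: it first replaces $f$ by a $K$-Lipschitz surrogate $f'$ with $\sgn(f'(x))=f(x)$, lifts the problem to $\R^2$, and samples a point $p=(t,f'(t))$ with $t\sim\zeta$ together with a fair coin choosing one of the two slope-$\pm K$ lines through $p$; the agreement cone $C_K(p)$ then contributes probability $1$ inside and exactly $\tfrac12$ outside, so integrating over $\zeta$ gives $\Pr[h(q)=\sgn(f'(x)-y)]>\tfrac12$, and restricting to $y=0$ yields the classifier for $f$. The generator $G$ is obtained by observing that $t\mapsto K(t,f'(t))$ is continuous and invoking universality. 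Your approach instead parameterizes the classifier law directly as a signed threshold $(s,\theta)$ and sculpts $F$ via the identity $F'=(2\pi-1)g$; this is more elementary and makes the design space completely transparent, but it is intrinsically one-dimensional (in $\R^d$ linear classifiers are not signed thresholds, which is precisely why the paper's appendix has to abandon hyperplanes for $L_1$-cones), whereas the cone argument extends to $\R^d$ essentially verbatim. Two points you should make explicit if you pursue your route: first, with only signed thresholds you have the structural constraint $F(-\infty)+F(+\infty)=1$, so whenever $f$ has the same sign at both infinities your $F$ must approach $\tfrac12$ asymptotically and the margin cannot be uniform (the paper's construction has exactly the same feature, since its coin is fair, so this is not fatal --- the theorem only demands pointwise strict inequality --- but it does undercut your closing claim that the strict inequality ``supplies the margin that absorbs the approximation error''; that margin degenerates at infinity and near sign changes, an issue the paper also leaves untreated); second, $\int\abs{F'}\leq\int g=1$ bounds the total variation of $F$, which is your quantitative version of the paper's Lipschitz constraint and is what makes ``almost any $f$'' mean locally finitely many sign changes with summable excursion heights. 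Your identification of the $K$-Lipschitz condition as the regularity hypothesis, and of boundary points as the exceptional set where only probability $\tfrac12$ is achievable, matches the paper's Appendix discussion exactly.
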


The theorem states two facts: first, for any such classification function $f$, there is a distribution $\gamma$ of linear classifiers which classifies $f$ in probability; second, $\gamma$ can be approximated to arbitrary accuracy using a neural network $G$.
In Appendix \ref{appendix:full_universality} we prove a version of this theorem for functions over $\R^d$, replacing $h$ with a 2-layer network of width $2d$.

\begin{definition} (Separates in Probability)
Given a function $f:\R\to\R$ and a distribution $\gamma$ of linear classifiers, we say that $h\sim\gamma$ \textbf{\emph{separates $f$ in probability}} if 
$\forall x,y\in\R: \Pr_{h\sim\gamma}\left[h(x,y)=\sgn(f(x)-y)\mid x,y\right] > 0.5$ and 
$\Pr_{h\sim\gamma}\left[h(x,f(x))=1\mid x\right] = 0.5$.
\end{definition}
In other words, if $\gamma$ can correctly answer the question \emph{is $y > f(x)$} with probability $>0.5$ we say \emph{$\gamma$ separates $f$ in probability}.
Note that $h(x,y)$ is a linear classifier in $\R^2$.

Informally, the proof of Theorem \ref{thm:strong_gens} is based on finding a $K$-Lipschitz function $f'$ such that $\sign(f'(x))=f(x)$, whose existence is the only limit on $f$. $f'$ is then shown to be separable in probability by a distribution $\gamma$ of linear classifiers.
Since we can separate $f'$, given a point $x\in\R$ we can calculate $h(x,0)$, which is 1 if $f(x)=1$ with probability greater than $\frac{1}{2}$. We thus have that $\gamma$ classifies $f$ in probability.
The distribution $\gamma$ is then described as a continuous function $\gamma(z):\R\to\R^2$, hence there is a neural network $G$ which approximates it to arbitrary accuracy.
Finally we have that $h(x;G(z))$ classifies $f$ in probability.
The full proof is presented in Appendix \ref{appendix:full_universality}. 

\paragraph{The constraint on $f$} $f$ is required to have an associated K-Lipschitz function $f'$ with $\sign(f'(x))=f(x)$. This is a minor limitation in practice; for example, one could take as $f'$ the Fourier series approximation to $f$ of a large enough order, such that the sign requirement is met. The value of $K$ is then determined by the approximation found, since a Fourier series is Lipschitz.
The existence of $f'$ has some other implications which are detailed in Appendix \ref{appendix:constraint_on_f}.

We now prove the first step in the proof of Theorem \ref{thm:strong_gens}.
\begin{theorem}\label{thm:sep-1d}
Let $f':\R\to\R$ be a $K-Lipschitz$ function. Then there exists a distribution $\gamma$ of linear classifiers that separates $f$ in probability.
\end{theorem}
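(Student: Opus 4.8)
The plan is to realize the separating distribution $\gamma$ as a two-stage sampling procedure built directly from the graph of $f'$. First I would sample a point $p=(a,f'(a))$ on the graph by drawing its abscissa $a$ from a fixed distribution $\rho$ with full support on $\R$ (e.g. a standard Gaussian), and then flip a fair coin to choose a slope $s\in\{+K,-K\}$. The sampled classifier is the linear classifier in $\R^2$ whose decision boundary is the line through $p$ with slope $s$, oriented to answer the question ``is $y>f'(x)$''. For a fixed $p$, the two possible lines (slopes $+K$ and $-K$) bound an upward cone $C_K(p)=\{(x,y):y>f'(a)+K|x-a|\}$ and, symmetrically, a downward cone; these are exactly the regions on which the two classifiers agree, while the complementary left/right wedges are where they disagree. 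This is the construction depicted in Figure \ref{fig:cone_viz}.

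The crux is a Lipschitz exclusion lemma. Because $f'$ is $K$-Lipschitz, every graph point $(x,f'(x))$ satisfies $f'(a)-K|x-a|\le f'(x)\le f'(a)+K|x-a|$, so no graph point lies strictly inside either cone of any $p$ (only the apex touches the boundary). I would use this to classify a query point $q=(x_q,y_q)$ by its position. If $q$ lies strictly above the graph, then $q$ can never fall in the downward cone of any $p$: the cone inequality $y_q<f'(a)-K|x_q-a|$ together with the Lipschitz bound $f'(a)-K|x_q-a|\le f'(x_q)$ would force $y_q<f'(x_q)$, contradicting $y_q>f'(x_q)$. Hence for every sampled $p$, either $q\in C_K(p)$, in which case both coin outcomes classify $q$ correctly, or $q$ lies in a disagreement wedge, in which case the coin is correct with probability exactly $\frac12$. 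The case $q$ strictly below the graph is symmetric (with the roles of the two cones swapped), and a point exactly on the graph lies strictly in neither cone for almost every $p$, so the two classifiers always disagree and the probability of label $1$ is exactly $\frac12$, matching the equality required by the definition.

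It then remains to show the inequality is strict off the graph, which is the one quantitative step and the main obstacle. Writing $P_b(q)$ for the set of abscissae $a$ with $q\in C_K(p)$, the probability of correct classification equals $\rho(P_b(q))\cdot 1+(1-\rho(P_b(q)))\cdot\frac12=\frac12+\frac12\,\rho(P_b(q))$, so I only need $\rho(P_b(q))>0$. For $q$ strictly above the graph the map $a\mapsto y_q-f'(a)-K|x_q-a|$ is continuous and takes the strictly positive value $y_q-f'(x_q)>0$ at $a=x_q$, hence stays positive on an open interval around $x_q$; since $\rho$ has full support this interval has positive $\rho$-measure. This yields $\Pr[\text{correct}]>\frac12$ for every off-graph query point and $=\frac12$ on the graph, which are exactly the two conditions of the definition of separation in probability. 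The delicate points to get right are the orientation conventions of the two classifiers, so that ``both agree'' genuinely means ``both correct'' on the relevant cone, and confirming via the exclusion lemma that no region contributes probability $0$; the rest is continuity and full support.
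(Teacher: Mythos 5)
Your proposal is correct and follows essentially the same route as the paper's proof: sampling a graph point $p=(a,f'(a))$, flipping a fair coin between the two slope-$\pm K$ classifiers through $p$, using the $K$-Lipschitz property to show that every cone $C_K(p)$ containing an off-graph query point $q$ classifies it correctly, and concluding $\Pr[\text{correct}]=\tfrac12+\tfrac12\,\rho(P_b(q))>\tfrac12$. If anything, your continuity argument showing that $P_b(q)$ contains an open interval around $x_q$ (hence has positive measure under a full-support $\rho$) makes explicit a step the paper only asserts, and you also verify the on-graph equality case that the paper's definition requires but its proof does not spell out.
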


\begin{proof}
Given a point $p=(x_p,y_p)\in\R^2$, and the constant $K$, let $h^{p+}(x,y)=(K,1)\cdot(x,y)-\left(Kx_p+y_p\right)$ and $h^{p-}(x,y)=(-K,1)\cdot(x,y)-\left(-Kx_p+y_p\right)$ be two linear classifiers passing through $p$.
The set $C_K(p)=\left\{(x,y)\mid h^{p+}(x,y)=h^{p-}(x,y)\right\}$ is a cone of slope $K$ whose origin is $p$, as shown in Figure \ref{fig:cone_viz:pb}.
Also, define $h^p(x,y)$ as the linear classifier generated by choosing one of $h^{p+},h^{p-}$ with equal probability. Now by construction, we have:

\begin{lemma} \label{lemma:kcone}
    Let $p=(t,f'(t))$ for some $t\in\R$. Points $q=(x,y)\in C_K(p)$ have $\Pr\left[h^p(q)=\sign(f'(x)-y)\mid q\right]=1$, and points $q=(x,y)\notin C_K(p)$ have $\Pr\left[h^p(q)=\sign(f'(x)-y)\mid q\right]=1/2$.
\end{lemma}

Given a distribution $\zeta$ of points from the support of $f'$, we define $\gamma$ as a sampling procedure:
Sample an input point $t\sim \zeta$. Then, calculate $p=(t,f'(t))$. Finally, sample $h^p(x,y)$. The distribution of the points $p$ is denoted $\zeta_p$.

We now verify that $\gamma$ indeed separates $f$ with probability.
Let $q = (x,y)$ be a point in $\R^2$ such that $y > f'(x)$ w.l.o.g. and let $p_q=(x,f'(x))$.
Let $p=(t,f'(t))$ be a point  on the graph of $f'$ such that $q\in C_K(p)$. Since $f'$ is K-Lipschitz and $f'(x)<y$, then invariably also $f'(t)<y$. Otherwise, $p_q$ would be in $C_K(p)$ as well, which is impossible for K-Lipschitz functions (see Figure \ref{fig:cone_viz:nonk}).
Hence, all such points $p$ with $q\in C_K(p)$ have $f'(t)<y$, and $\Pr[h^p(q)=1\mid q] = 1$ from Lemma \ref{lemma:kcone}. Denote this set $P_b(q)$.
For any point $p$ on the graph of $f'$ such that $q \notin C_K(p)$, we have $\Pr[h^p(q)=1\mid q] = \frac{1}{2}$. Denote the set of these points as $P_o(q)$.
This implies the following:
\begin{align*}
    \Pr_{h\sim\gamma}&[h(q)=1\mid q] = \int_{\R^2}\Pr[h^p(q)=1\mid q]d\zeta_p(p)=\\
    &= \int_{P_o(q)}\underbrace{\Pr_{h\sim C_K(p)}[h^p(q)=1\mid q]}_{=1/2}d\zeta_p(p) \\
    &+ \int_{P_b(q)}\underbrace{\Pr_{h\sim C_K(p)}[h^p(q)=1\mid q]}_{=1}d\zeta_p(p) \\
    &= \frac{1}{2} \int_{P_o(q)}d\zeta_p(p) + \int_{P_b(q)}d\zeta_p(p) > \frac{1}{2}
\end{align*}
The last inequality holds as long as $\int_{P_b(q)}d\zeta(p) > 0$ for all $q\in\R^2$, which is a requirement on $\zeta$. This requirement easily holds for distributions with support over the whole support of $f$, i.e. $\supp\{f\}\subseteq\supp\{\zeta\}$. For example, if $\supp\{f\}=\R$ then $\zeta=\mathcal{N}(0,1)$ is sufficient.
We got $\Pr_{h\sim\gamma}[h(q)=1\mid q] > \frac{1}{2}$ as required, since $y > f(x)$.
\end{proof}

\paragraph{Computing $f'$ or sampling $p$} \label{section:comput_or_sample}
In our proof for Theorem \ref{thm:strong_gens}, we assumed we can compute the function $f'$ to sample a point on its surface.
This explains the source of power of our theorem: the function is encoded in the parameter distribution computed by $G$. 
However, the complexity of $G$ is the same as that of a deterministic network approximating $f$.
If it were easier to sample $f'$ directly than to compute it, we expect the network $G$ to be more efficient.
For example, $\hat{h}$ from Section \ref{thm:nd_ball} uses an easy to sample distribution to drastically reduce the size of the network.

\subsection{Data Complexity vs. Random Complexity Tradeoff}
\label{section:complexity_continuum}

\begin{figure}[ht]
\vskip 0.2in
\begin{center}
\centerline{\includegraphics[width=5.5cm]{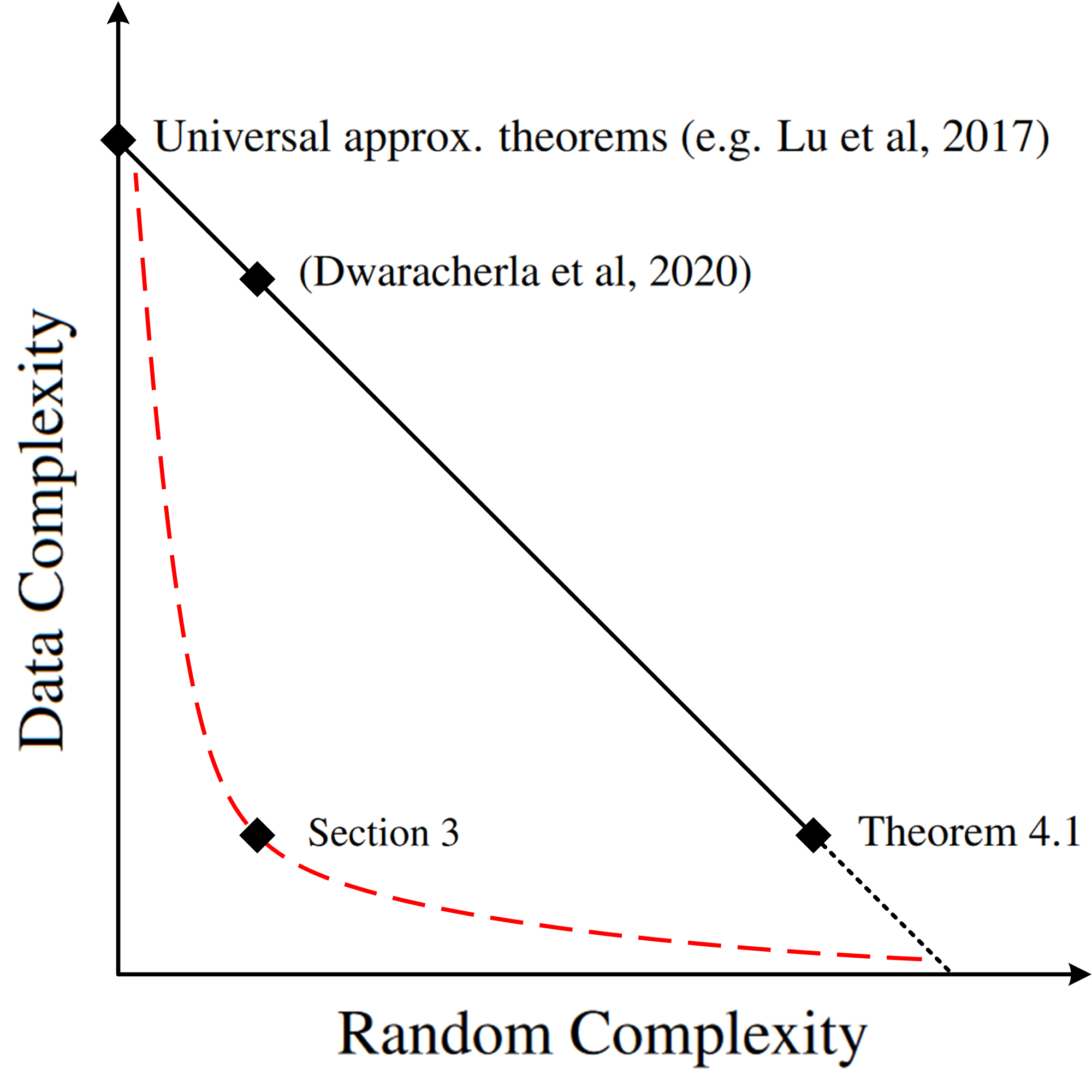}}
\caption{\small The complexity landscape of CFNNs that solve classification problems. The network in \cite{dwaracherla} has a high $DC$ and low $RC$; the network in Theorem \ref{thm:strong_gens} has low $DC$ and high $RC$. It is logical that on a line of constant complexity $DC+RC$, there are many networks that solve the same problem. $\hat{h}$ from Section \ref{thm:nd_ball} has $DC+RC\in\mathcal{O}(1)$ which is exponentially better than deterministic networks. We thus conjecture that networks with minimal complexity must be CFNN, i.e. they exist on the dashed red line.}
\label{fig:complexity_continuum}
\end{center}
\vskip -0.2in
\end{figure}

We now connect Theorem \ref{thm:strong_gens} and Theorem 1 in \cite{dwaracherla}. Their work used a hypernetwork with a very simple (linear) generator of weights to a potentially highly complex base network. Our theorem uses a very simple base network with a potentially highly complex generator of its weights. In both cases, the networks can achieve arbitrary accuracy.

Define the Data Complexity of a CFNN $DC\left(N(x, r)\right)$ as the number of neurons in the computational path from data $x$ to its output. Also, define the Random Complexity $RC\left(N(x,r)\right)$ as the number of neurons in the computational path from $r$ to its output.
It is only logical to suggest that we can find networks with different trade-offs of $DC(N)$ and $RC(N)$, and as long as $DC(N)+RC(N)$ is large enough the network could achieve the same accuracy as the networks in the theorems. These networks form a line of constant $DC(N)+RC(N)$, shown in Figure \ref{fig:complexity_continuum}.

However, in Section \ref{thm:nd_ball} we've given an example of a CFNN with $DC+RC\in\mathcal{O}(1)$, that can solve a problem which deterministic NNs require exponential DC to solve.
This example is not on the line, which leads us to conjecture the following:

\begin{conjecture}
    Given a classification function $f$, and any deterministic network $g$ which approximates it, there exists a CFNN $h$ with $DC(g)\in\Omega(DC(h)+RC(h))$ which approximates $f$ to at least the same accuracy as $g$.
\end{conjecture}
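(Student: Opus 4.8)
The plan is to split the conjecture into a trivial complexity bound and the genuinely hard claim of a strict advantage, and to be honest about which part is within reach. First I would observe that every deterministic network $g$ is itself a degenerate CFNN that simply ignores its random input $r$: its probability function is a point mass, $\mathbf{p}_{f(x)}(x)\in\{0,1\}$, so amplification returns $g(x)$ with probability $1$. Taking $h=g$ gives $RC(h)=0$ and $DC(h)=DC(g)$, hence $DC(h)+RC(h)=DC(g)$ and $DC(g)\in\Omega(DC(h)+RC(h))$ holds with equality, while the random accuracy of $h$ equals the accuracy of $g$ by construction. This already certifies the literal statement, so the substantive work is to produce an $h$ for which the inequality is strict, i.e. $DC(h)+RC(h)\in o(DC(g))$.

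For the strict-improvement content I would attempt a general reduction that takes an arbitrary deterministic $g$ and extracts a cheaper randomized surrogate, using the two constructions already in the paper as templates. Theorem \ref{thm:dball_classification} trades neurons for an easy-to-sample weight distribution (Gaussian projections exploiting the symmetry of $f_R$), while Theorem \ref{thm:strong_gens} shows that a single linear base classifier suffices once the generator $G$ encodes the entire decision boundary. Concretely, given $g$ realizing $f$, I would try to (i) read off from $g$ a $K$-Lipschitz surrogate $f'$ with $\sgn(f')=f$, (ii) invoke Theorem \ref{thm:sep-1d} to obtain a separating distribution $\gamma$ of linear classifiers, and (iii) push the cost of describing $\gamma$ into $RC$ through the generator $G$, leaving $DC(h)=O(d)$. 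The hope is that the representational cost of $g$ upper-bounds the cost of $G$, so that $DC(h)+RC(h)=O(DC(g))$ while frequently being far smaller, exactly as in the ball example where $DC(g)=\Omega(e^d)$ collapses to $O(1)$.

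The main obstacle is that no such reduction can be uniformly strict, which is presumably why the statement is recorded as a conjecture rather than a theorem. For trivially simple targets, e.g. a linearly separable $f=\sgn(w\cdot x-b)$ with $DC(g)=1$, the trivial embedding is already optimal and there is no room for $DC(h)+RC(h)\in o(1)$; randomness plus amplification cannot beat a single exact neuron. Thus any honest proof must either weaken the claim to ``there exists a CFNN at least as good, with $DC(h)+RC(h)=O(DC(g))$,'' which the embedding settles, or restrict $f$ to a class where deterministic approximation is provably expensive (large width forced by a symmetry or oscillation argument in the style of \cite{DBLP:journals/corr/SafranS16}) while an easy-to-sample weight distribution nonetheless exists. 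I expect the decisive difficulty to be characterizing precisely that class: one needs a lower bound forcing $DC(g)$ to be large together with a matching randomized upper bound, and bridging these two regimes for arbitrary $f$ is the genuinely open part of the conjecture.
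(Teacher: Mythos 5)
The paper does not actually prove this statement: it is recorded as a conjecture and left open, supported only by three data points (the $d$-ball construction of Section \ref{thm:nd_ball}, Theorem \ref{thm:strong_gens}, and Theorem 1 of \cite{dwaracherla}), so there is no proof of record to compare yours against. Your analysis is nonetheless essentially the right reading of the situation, and in one respect sharper than the paper's. Your first observation is correct: a deterministic $g$ that ignores its random input is formally a CFNN under the paper's definition (it ``uses amplification'' over identical samples), and taking $h=g$ gives $DC(h)+RC(h)=DC(g)$, so the literal $\Omega$-statement is vacuously satisfied. The conjectural content lives entirely in the prose gloss (``the minimum can only be achieved by CFNN networks''), i.e.\ in strictness, and your linearly separable counterexample correctly shows that a uniformly strict version is false; any honest formalization must restrict to problems where deterministic approximation is provably expensive.

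On your proposed reduction (i)--(iii) for the strict part: the decisive gap is exactly where you suspect it is, and the paper itself already tells you that step (iii) cannot work generically. In the paragraph ``Computing $f'$ or sampling $p$'' the authors note that in the construction of Theorem \ref{thm:strong_gens} the complexity of the generator $G$ is the same as that of a deterministic network approximating $f$, because $G$ must encode the entire decision boundary in order to sample points on the graph of $f'$. Pushing the cost into $RC$ via that construction therefore yields $DC(h)+RC(h)=\Theta(DC(g))$, never $o(DC(g))$; the exponential savings in the ball example come from the weight distribution being easy to \emph{sample} (a spherically symmetric $\mathcal{N}(0,I_d)$ matched to the symmetry of $f_R$) rather than from the generic generator construction. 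Characterizing when such an easy-to-sample distribution exists --- a matching pair of a deterministic lower bound and a randomized upper bound --- is the open core of the conjecture, and neither your sketch nor anything in the paper closes it.
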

In other words, we conjecture that for any classification problem, the tradeoff $DC+RC$ has a minimum that can only be achieved by CFNN networks.

\section {Experimental Study}
\label{section:experiments}
In this section we experiment with CFNNs as classifiers of CIFAR10 and CIFAR100 \cite{cifar}. We present 3 experiments: a study of a hypernetwork architecture inspired by Theorem \ref{thm:strong_gens}; ResNet networks with Dropout viewed as CFNNs; and an analysis of CFNN's accuracy as the number of amplification samples changes.
In all experiments, empirical random accuracy was used as the target metric (see Definition \ref{def:random_acc}).

\label{section:training}
\paragraph{Training} 
In order to optimize the Random Accuracy of a CFNN model using SGD, we need to estimate the probability function $\mathbf{p}$ in a differentiable way.
Then, the loss can be any standard loss function for classification, calculated as $\text{Loss}(\hat{\mathbf{p}}(x),y)$, e.g. Cross Entropy: $\text{CEL}(\hat{\mathbf{p}},y) = -\log(\hat{\mathbf{p}}_y)$.

We estimate the probability function as an approximate histogram: $\Tilde{\mathbf{p}}(x) = \frac{1}{n} \sum_{j=1}^n \Tilde{I}\left(N(x,r_j)\right)$. $\Tilde{I}$ is an approximate, differentiable one-hot vector such that it is almost 0 everywhere except at the index $\argmax_i N(x,r_i)$ where it is close to 1, and the sum of its elements is 1.
We use the Gumbel Softmax function \cite{gumbel_softmax}, as $\Tilde{I}$: $\mathrm{gsm}(o;\tau)=\mathrm{softmax}(o\cdot \frac{1}{\tau})$.
In practice we used $\tau=1$ for most experiments, i.e. $\Tilde{I}(o)=\mathrm{softmax}(o)$.

The training itself is standard SGD, where for each batch of images we sample $n$ instances of the random variable and calculate $\Tilde{p}(x)$.

\subsection{HyperNetowrk CFNNs}
\label{expr:hypernet}

\begin{figure}
\vskip 0.2in
\begin{center}
\centerline{\includegraphics[width=6cm]{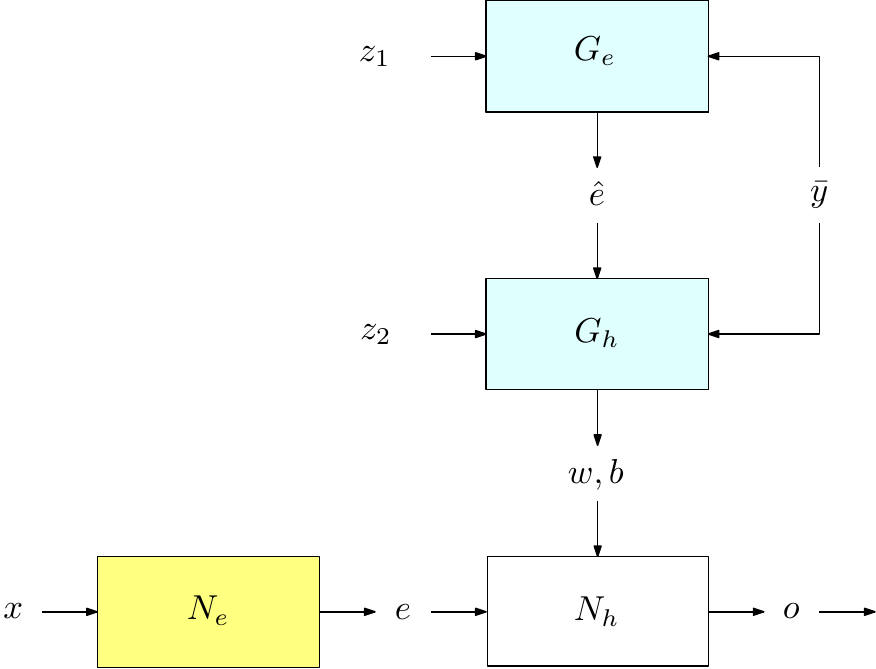}}
\caption[asd]{
    \small Hypernetwork CFNN architecture, following the two-stage sampling used in the theoretical proof of Theorem \ref{thm:strong_gens}.
    $N_e$ projects the input onto $\mathbb{R}^E$. $G_e$ generates $\hat{e}$ using a condition $\bar{y}\in[C]$ sampled using the class frequencies of the data. $G_h$ uses $\hat{e},\bar{y}$ to directly compute the  parameters of a linear layer $N_h(x;w,b)=wx-b$. Both generators receive an additional random input $z_1,z_2$ sampled uniformly in $[0,1]^m$.
}
\label{fig:hypernetwork_cfnn}
\end{center}
\vskip -0.2in
\end{figure}

Inspired by Theorem \ref{thm:strong_gens}, we construct our CFNN as a hypernetwork. 
The target function $f:\R^d\to[C]$ has $C$ classes.
In the proof, the generator samples a point $p=(t,f'(t))$ and then computes parameters for a linear classifier $w,b$.
Instead, we first sample  $\bar{y}\in[C]$ directly and generate a candidate $t$ such that $f(t)$ is likely to be $\bar{y}$, using a network $G_e$. 
A network $G_h$ then finds parameters for a linear classifier $N_h$, without explicitly finding $f'$.

To avoid sampling linear classifiers in pixel space directly, which could require the generator to learn a highly modal distribution, we embed the input images into a smaller space $\R^E$ using a deterministic network $N_e:\mathbb{R}^d\rightarrow \mathbb{R}^E$. We've used $E=256$.
Both generators have 6 ResNet-based \cite{resnet} blocks, and $N_e$ has $D$ blocks. 
The final architecture is visualized in Figure \ref{fig:hypernetwork_cfnn} and described in detail in Appendix \ref{appendix:hypernet_details}.

As a baseline, we take the same architecture but remove the generators and replace $N_h$ with a standard FC layer which has learned parameters. The difference in accuracy between our network and the baseline can then be attributed to the generator's learned weight distribution.

\begin{table}[t]
\caption{\small Model Random Accuracy with different $N_e$ depths for the CIFAR10 and CIFAR100 benchmarks. Standard deviation presented for depths 6 and 10 respectively.}
\label{tbl:cifar10-table}
\vskip 0.15in
\begin{center}
\begin{small}
\begin{sc}
\begin{tabular}{l|cccr}
\toprule
&\multicolumn{3}{c}{CIFAR10} \\
$D$ & CFNN & Det. & $\Delta$ \\
\midrule
2 & 44.05 & 42.14 & +1.91 \\
\textbf{6} & \textbf{81.45 $\pm$ 0.24} & \textbf{79.71 $\pm$ 0.52} & \textbf{+1.74$\pm$ 0.57} \\
10 & 89.37 & 88.41 &+0.96\\
14 & 91.81 & 91.42&+0.39 
\\
20 & 92.37 & 92.89  &-0.52  
\\
\midrule
&\multicolumn{3}{c}{CIFAR100}\\
\midrule
6 & 45.29 & 44.23 & +1.06\\
\textbf{10} & \textbf{61.24$\pm$ 0.41} & \textbf{58.09 $\pm$0.22} & \textbf{+3.15 $\pm$0.29}\\
14 & 66.48&	65.16	&+1.32 \\
20 & 69.26&	67.68	&+1.58 \\
56 & 72.00&	72.23&	-0.23 \\ 
\bottomrule
\end{tabular}
\end{sc}
\end{small}
\end{center}
\vskip -0.1in
\end{table}

We hypothesize that when $N_e$ is relatively simple, the problem remains complex enough in the embedding space that the network will have more to gain by using majority. On the other hand, deep $N_e$ might oversimplify the problem making it easily separable with a single deterministic linear classifier, which would result in negligible difference between the CFNN and the baseline.

Results are shown in Table \ref{tbl:cifar10-table}. The hypernetwork was able to improve accuracy by up to 3.15\% on CIFAR100 relative to the baseline, for a relatively shallow depth of 6 and 10 blocks respectively. As expected, with increasing $L$ the improvement decreases.
The improvement is more pronounced on CIFAR100 due to the number of classes. For a CFNN to be correct, it only requires the correct answer to be slightly more likely than the rest.
With more classes, this probability can be much lower, which would allow the CFNN to be "more random" in its output.
This suggests that for more difficult datasets, the improvements could be greater.
We elaborate on this idea further in appendix \ref{appendix:delta}, and present further ablations in \ref{appendix:hypernet_ablations}.

\subsection{Dropout CFNNs}
\label{expr:dropout}
We now show that the CFNN framework can be easily applied to standard networks that already use randomness during training, e.g. networks with Dropout \cite{14original_dropout_paper}.
We use a ResNet20 \cite{resnet} model with Dropout and train it using standard techniques and as a CFNN.
During inference, we simply use the Dropout layer as we do during training.

We also perform an ablation of our technique.
We measure the performance gained using standard training with/out amplification during inference. 
We applied standard inference to a model trained with majority amplification which is equivalent to sampling the expected mean of the Dropout mask. Lastly, we also measure a different amplification scheme, the mean of model outputs, during inference. 
The output using the empirical mean is simply $\argmax\{\frac{1}{n}\sum N_i(x)\}$.
Note that standard training with mean amplification during inference is equivalent to MCDropout, a method used to estimate confidence intervals \cite{gal2016mc-dropout}. We further address this method in Appendix \ref{appendix:mcdropout}.

Results are shown in Table \ref{tbl:dropout_results}.
Performing amplification on ResNet20 with Dropout improved performance even when using standard training, but the greatest improvement was achieved when amplification was used in training. 
The fact that the MajTrain/StandardTest model performed so poorly shows that most of the accuracy is gained from the variance of the distribution and not its mean.
This result, as well as the nearly 10\% magnitude of the accuracy gain for CIFAR100, validates our technique.
Importantly, we see that the CFNN framework can be effectively used on standard models with very few changes.

\begin{table}[t]
\caption{\small Random Accuracy of ResNet20 with dropout on CIFAR10 and CIFAR100. Standard deviation is measured as the deviation of the mean of 3 training runs, and 3 inference measurements for each one.}
\label{tbl:dropout_results}
\vskip 0.15in
\begin{center}
\begin{small}
\begin{sc}
\begin{tabular}{lccr}
\toprule
method & CIFAR10 & CIFAR100 \\
\midrule
\makecell[l]{Standard Train,\\ Standard Test} & 87.82 $\pm$ 0.65 & 50.90 $\pm$ 0.03 \\
\midrule
\makecell[l]{Standard Train,\\ Mean in Test \\ (MCDropout)} & 88.61 $\pm$ 0.10 & 53.66 $\pm$ 0.24 \\
\midrule
\makecell[l]{Standard Train, \\Maj in Test} & 88.37 $\pm$ 0.18 & 53.31 $\pm$ 0.16 \\
\midrule
\makecell[l]{Maj Train,\\ Standard Test} & 89.70 $\pm$ 0.39  & 47.92 $\pm$ 1.78 \\
\midrule
\makecell[l]{Maj Train,\\ Mean in Test} & \textbf{90.49 $\pm$ 0.12} & 56.49 $\pm$ 0.57 \\
\midrule
\makecell[l]{Maj Train, \\Maj Test (CFNN)} & 90.20 $\pm$ 0.14 & \textbf{60.16 $\pm$ 0.67} \\
\bottomrule
\end{tabular}
\end{sc}
\end{small}
\end{center}
\vskip -0.1in
\end{table}

\subsection{Sampling Generalization}

During training, we always sample $n_{train}$ times from the CFNN model.
This introduces a new notion of generalization, \emph{sampling generalization}, which is a CFNN's ability to improve its empirical random accuracy when supplied with more samples than it saw during training.
It is not immediately obvious that performance should improve.

To test if our CFNNs are able to generalize to higher sample counts, we ran a series of experiments on the models of \ref{expr:hypernet} and \ref{expr:dropout}. We trained each network with varying number of samples $n_{train}$, and measured their empirical random accuracy with several sample counts $n_{test}$.
Results for ResNet20 with dropout on CIFAR100 are presented in Figure \ref{fig:acc_ntest_ntrain}. For more results see Appendix \ref{appendix:sampling_generalization}.
The network did learn to generalize to higher sample counts.
This illustrates that CFNNs can learn to exhibit qualities of randomized algorithms which were our original inspiration.

\begin{figure}[t]
\begin{center}
\centerline{\includegraphics[width=\linewidth]{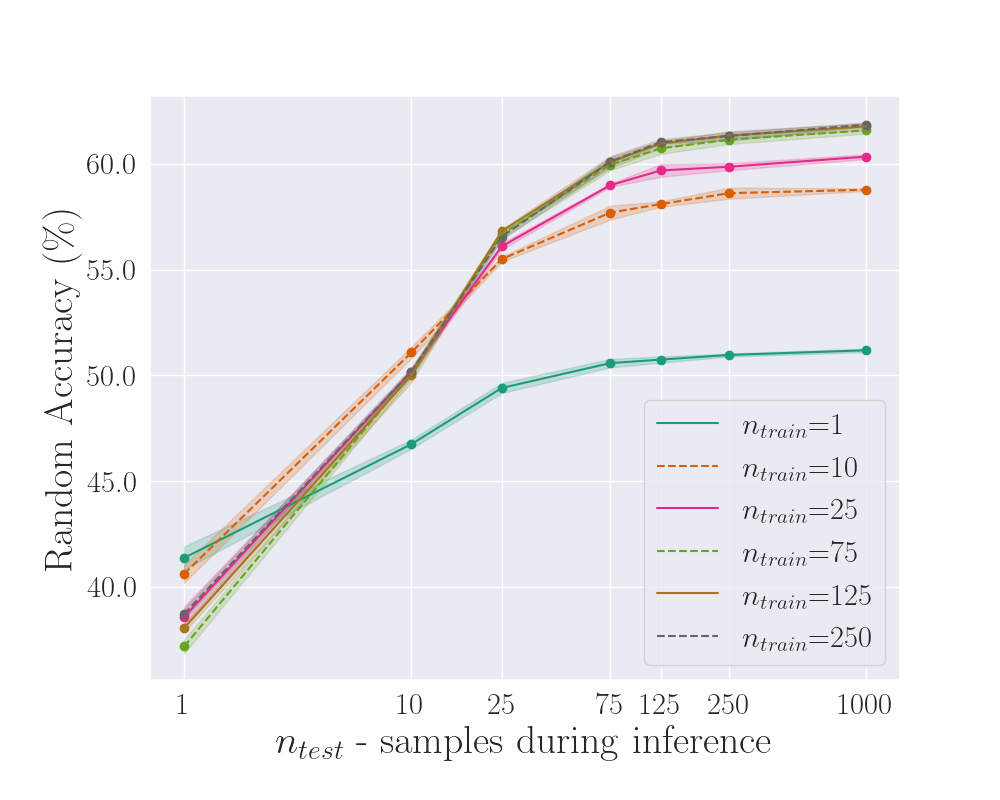}}
\caption[asd]{
    \small Empirical random accuracy of a ResNet20 with Dropout, plotted against the number samples taken during inference $n_{test}$. Accuracy improves as more samples are taken, even though the model only sees $n_{train}$ samples during training. This demonstrates sampling generalization. Shaded areas indicate confidence bounds of 1 standard deviation from the expected value. 
}
\label{fig:acc_ntest_ntrain}
\end{center}
\vskip -0.4in
\end{figure}

\section{Conclusion}
In this work, we've shown strong evidence to support the claim that NNs with access to random can achieve improved accuracy when using amplification.
We've shown that almost any classification problem can be solved with only linear classifiers, as long as they're sampled from a complex enough distribution. 
We've given an example to such a problem where a CFNN offers exponential improvement over deterministic networks.
Finally we gave strong empirical evidence to the efficacy of CFNNs, which achieved significant improvements over deterministic baselines.
However there are still several open questions. We only deal with majority amplification, and others might achieve even better performance. We also suspect that training CFNNs could be done more effectively than by using SGD. 
Finally, using CFNNs in other settings such as regression or generative modelling could produce performance improvements as well.

\nocite{*}

\bibliography{bibliography}

\begin{thebibliography}{44}
\providecommand{\natexlab}[1]{#1}
\providecommand{\url}[1]{\texttt{#1}}
\expandafter\ifx\csname urlstyle\endcsname\relax
  \providecommand{\doi}[1]{doi: #1}\else
  \providecommand{\doi}{doi: \begingroup \urlstyle{rm}\Url}\fi

\bibitem[B{\'a}r{\'a}ny \& F{\"u}redi(1987)B{\'a}r{\'a}ny and
  F{\"u}redi]{barany1987computing}
B{\'a}r{\'a}ny, I. and F{\"u}redi, Z.
\newblock Computing the volume is difficult.
\newblock \emph{Discrete \& Computational Geometry}, 2\penalty0 (4):\penalty0
  319--326, 1987.

\bibitem[Bechhofer et~al.(1959)Bechhofer, Elmaghraby, and Morse]{bechhofer1959}
Bechhofer, R.~E., Elmaghraby, S., and Morse, N.
\newblock A single-sample multiple-decision procedure for selecting the
  multinomial event which has the highest probability.
\newblock \emph{Ann. Math. Statist.}, 30\penalty0 (1):\penalty0 102--119, 03
  1959.
\newblock \doi{10.1214/aoms/1177706362}.
\newblock URL \url{https://doi.org/10.1214/aoms/1177706362}.

\bibitem[Bengio \& LeCun(2007)Bengio and LeCun]{Bengio+chapter2007}
Bengio, Y. and LeCun, Y.
\newblock Scaling learning algorithms towards {AI}.
\newblock In \emph{Large Scale Kernel Machines}. MIT Press, 2007.

\bibitem[Chen et~al.(2016)Chen, Duan, Houthooft, Schulman, Sutskever, and
  Abbeel]{DBLP:journals/corr/ChenDHSSA16}
Chen, X., Duan, Y., Houthooft, R., Schulman, J., Sutskever, I., and Abbeel, P.
\newblock Infogan: Interpretable representation learning by information
  maximizing generative adversarial nets.
\newblock \emph{CoRR}, abs/1606.03657, 2016.
\newblock URL \url{http://arxiv.org/abs/1606.03657}.

\bibitem[De~Bie et~al.(2019)De~Bie, Peyr{\'e}, and Cuturi]{de2019stochastic}
De~Bie, G., Peyr{\'e}, G., and Cuturi, M.
\newblock Stochastic deep networks.
\newblock In \emph{International Conference on Machine Learning}, pp.\
  1556--1565. PMLR, 2019.

\bibitem[de~Bie et~al.(2019)de~Bie, Peyré, and Cuturi]{debie2019stochastic}
de~Bie, G., Peyré, G., and Cuturi, M.
\newblock Stochastic deep networks, 2019.

\bibitem[Doersch(2021)]{doersch2021tutorial}
Doersch, C.
\newblock Tutorial on variational autoencoders, 2021.

\bibitem[Dwaracherla et~al.(2020)Dwaracherla, Lu, Ibrahimi, Osband, Wen, and
  Roy]{dwaracherla}
Dwaracherla, V., Lu, X., Ibrahimi, M., Osband, I., Wen, Z., and Roy, B.~V.
\newblock Hypermodels for exploration.
\newblock \emph{CoRR}, abs/2006.07464, 2020.
\newblock URL \url{https://arxiv.org/abs/2006.07464}.

\bibitem[Dyer et~al.(1991)Dyer, Frieze, and Kannan]{dyer1991random}
Dyer, M., Frieze, A., and Kannan, R.
\newblock A random polynomial-time algorithm for approximating the volume of
  convex bodies.
\newblock \emph{Journal of the ACM (JACM)}, 38\penalty0 (1):\penalty0 1--17,
  1991.

\bibitem[Eldan \& Shamir(2015)Eldan and Shamir]{DBLP:journals/corr/EldanS15}
Eldan, R. and Shamir, O.
\newblock The power of depth for feedforward neural networks.
\newblock \emph{CoRR}, abs/1512.03965, 2015.
\newblock URL \url{http://arxiv.org/abs/1512.03965}.

\bibitem[Feng et~al.(2020)Feng, Zhang, Dong, Han, Luan, Xu, Yang, and
  Tang]{graph_random_nn}
Feng, W., Zhang, J., Dong, Y., Han, Y., Luan, H., Xu, Q., Yang, Q., and Tang,
  J.
\newblock Graph random neural network.
\newblock \emph{CoRR}, abs/2005.11079, 2020.
\newblock URL \url{https://arxiv.org/abs/2005.11079}.

\bibitem[Frankle \& Carbin(2018)Frankle and
  Carbin]{DBLP:journals/corr/abs-1803-03635}
Frankle, J. and Carbin, M.
\newblock The lottery ticket hypothesis: Training pruned neural networks.
\newblock \emph{CoRR}, abs/1803.03635, 2018.
\newblock URL \url{http://arxiv.org/abs/1803.03635}.

\bibitem[Gal \& Ghahramani(2016)Gal and Ghahramani]{gal2016mc-dropout}
Gal, Y. and Ghahramani, Z.
\newblock Dropout as a bayesian approximation: Representing model uncertainty
  in deep learning, 2016.

\bibitem[Gallicchio \& Scardapane(2020)Gallicchio and
  Scardapane]{DBLP:journals/corr/abs-2002-12287}
Gallicchio, C. and Scardapane, S.
\newblock Deep randomized neural networks.
\newblock \emph{CoRR}, abs/2002.12287, 2020.
\newblock URL \url{https://arxiv.org/abs/2002.12287}.

\bibitem[Gallicchio et~al.(2017)Gallicchio, Mart{\'\i}n-Guerrero, Micheli, and
  Soria-Olivas]{gallicchio2017randomized}
Gallicchio, C., Mart{\'\i}n-Guerrero, J.~D., Micheli, A., and Soria-Olivas, E.
\newblock Randomized machine learning approaches: Recent developments and
  challenges.
\newblock In \emph{ESANN}, 2017.

\bibitem[Goodfellow et~al.(2016)Goodfellow, Bengio, Courville, and
  Bengio]{goodfellow2016deep}
Goodfellow, I., Bengio, Y., Courville, A., and Bengio, Y.
\newblock \emph{Deep learning}, volume~1.
\newblock MIT Press, 2016.

\bibitem[Hansen \& Salamon(1990)Hansen and Salamon]{original_ensembles}
Hansen, L. and Salamon, P.
\newblock Neural network ensembles.
\newblock 12\penalty0 (10):\penalty0 993--1001, Oct 1990.
\newblock ISSN 1939-3539.
\newblock \doi{10.1109/34.58871}.

\bibitem[He et~al.(2015)He, Zhang, Ren, and Sun]{resnet}
He, K., Zhang, X., Ren, S., and Sun, J.
\newblock Deep residual learning for image recognition, 2015.

\bibitem[Heckerman(2020)]{bayesian_tutorial}
Heckerman, D.
\newblock A tutorial on learning with bayesian networks.
\newblock \emph{CoRR}, abs/2002.00269, 2020.
\newblock URL \url{https://arxiv.org/abs/2002.00269}.

\bibitem[Hinton et~al.(2006)Hinton, Osindero, and Teh]{Hinton06}
Hinton, G.~E., Osindero, S., and Teh, Y.~W.
\newblock A fast learning algorithm for deep belief nets.
\newblock \emph{Neural Computation}, 18:\penalty0 1527--1554, 2006.

\bibitem[Ho et~al.(2020)Ho, Jain, and
  Abbeel]{DBLP:journals/corr/abs-2006-11239}
Ho, J., Jain, A., and Abbeel, P.
\newblock Denoising diffusion probabilistic models.
\newblock \emph{CoRR}, abs/2006.11239, 2020.
\newblock URL \url{https://arxiv.org/abs/2006.11239}.

\bibitem[Jang et~al.(2017)Jang, Gu, and Poole]{gumbel_softmax}
Jang, E., Gu, S., and Poole, B.
\newblock Categorical reparameterization with gumbel-softmax, 2017.

\bibitem[Jospin et~al.(2020)Jospin, Buntine, Boussa{\"{\i}}d, Laga, and
  Bennamoun]{bayesian_hands_on}
Jospin, L.~V., Buntine, W.~L., Boussa{\"{\i}}d, F., Laga, H., and Bennamoun, M.
\newblock Hands-on bayesian neural networks - a tutorial for deep learning
  users.
\newblock \emph{CoRR}, abs/2007.06823, 2020.
\newblock URL \url{https://arxiv.org/abs/2007.06823}.

\bibitem[Kari(2020)]{stochastic_neural_network_potential}
Kari, S.~R.
\newblock Realization of stochastic neural networks and its potential
  applications.
\newblock \emph{CoRR}, abs/2011.06427, 2020.
\newblock URL \url{https://arxiv.org/abs/2011.06427}.

\bibitem[Karras et~al.(2021)Karras, Aittala, Laine, H{\"{a}}rk{\"{o}}nen,
  Hellsten, Lehtinen, and Aila]{DBLP:journals/corr/abs-2106-12423}
Karras, T., Aittala, M., Laine, S., H{\"{a}}rk{\"{o}}nen, E., Hellsten, J.,
  Lehtinen, J., and Aila, T.
\newblock Alias-free generative adversarial networks.
\newblock \emph{CoRR}, abs/2106.12423, 2021.
\newblock URL \url{https://arxiv.org/abs/2106.12423}.

\bibitem[Kingma \& Welling(2014)Kingma and Welling]{kingma2014autoencoding}
Kingma, D.~P. and Welling, M.
\newblock Auto-encoding variational bayes, 2014.

\bibitem[Kingma \& Welling(2019)Kingma and
  Welling]{DBLP:journals/corr/abs-1906-02691}
Kingma, D.~P. and Welling, M.
\newblock An introduction to variational autoencoders.
\newblock \emph{CoRR}, abs/1906.02691, 2019.
\newblock URL \url{http://arxiv.org/abs/1906.02691}.

\bibitem[Krizhevsky(2009)]{cifar}
Krizhevsky, A.
\newblock Learning multiple layers of features from tiny images.
\newblock Technical report, 2009.

\bibitem[Lu et~al.(2017)Lu, Pu, Wang, Hu, and Wang]{universality_in_depth}
Lu, Z., Pu, H., Wang, F., Hu, Z., and Wang, L.
\newblock The expressive power of neural networks: {A} view from the width.
\newblock \emph{CoRR}, abs/1709.02540, 2017.
\newblock URL \url{http://arxiv.org/abs/1709.02540}.

\bibitem[Merkh \& Mont{\'{u}}far(2019)Merkh and
  Mont{\'{u}}far]{stochastic_neural_network}
Merkh, T. and Mont{\'{u}}far, G.
\newblock Stochastic feedforward neural networks: Universal approximation.
\newblock \emph{CoRR}, abs/1910.09763, 2019.
\newblock URL \url{http://arxiv.org/abs/1910.09763}.

\bibitem[Mirza \& Osindero(2014)Mirza and Osindero]{conditional_gan}
Mirza, M. and Osindero, S.
\newblock Conditional generative adversarial nets.
\newblock \emph{CoRR}, abs/1411.1784, 2014.
\newblock URL \url{http://arxiv.org/abs/1411.1784}.

\bibitem[Neal(1990)]{Neal90learningstochastic}
Neal, R.~M.
\newblock Learning stochastic feedforward networks.
\newblock 1990.

\bibitem[Osogami(2017)]{boltzmann_machines1}
Osogami, T.
\newblock Boltzmann machines and energy-based models.
\newblock \emph{CoRR}, abs/1708.06008, 2017.
\newblock URL \url{http://arxiv.org/abs/1708.06008}.

\bibitem[Safran \& Shamir(2016)Safran and Shamir]{DBLP:journals/corr/SafranS16}
Safran, I. and Shamir, O.
\newblock Depth separation in relu networks for approximating smooth non-linear
  functions.
\newblock \emph{CoRR}, abs/1610.09887, 2016.
\newblock URL \url{http://arxiv.org/abs/1610.09887}.

\bibitem[Srivastava et~al.(2013)Srivastava, Salakhutdinov, and
  Hinton]{boltzmann_machines2}
Srivastava, N., Salakhutdinov, R., and Hinton, G.~E.
\newblock Modeling documents with deep boltzmann machines.
\newblock \emph{CoRR}, abs/1309.6865, 2013.
\newblock URL \url{http://arxiv.org/abs/1309.6865}.

\bibitem[Srivastava et~al.(2014)Srivastava, Hinton, Krizhevsky, Sutskever, and
  Salakhutdinov]{14original_dropout_paper}
Srivastava, N., Hinton, G., Krizhevsky, A., Sutskever, I., and Salakhutdinov,
  R.
\newblock Dropout: A simple way to prevent neural networks from overfitting.
\newblock \emph{Journal of Machine Learning Research}, 15\penalty0
  (56):\penalty0 1929--1958, 2014.
\newblock URL \url{http://jmlr.org/papers/v15/srivastava14a.html}.

\bibitem[Tang \& Salakhutdinov(2013)Tang and Salakhutdinov]{stochastic_tang}
Tang, Y. and Salakhutdinov, R.
\newblock Learning stochastic feedforward neural networks.
\newblock In \emph{Proceedings of the 26th International Conference on Neural
  Information Processing Systems - Volume 1}, NIPS'13, pp.\  530–538, Red
  Hook, NY, USA, 2013. Curran Associates Inc.

\bibitem[Tao(2019)]{DBLP:journals/corr/abs-1904-05488}
Tao, S.
\newblock Deep neural network ensembles.
\newblock \emph{CoRR}, abs/1904.05488, 2019.
\newblock URL \url{http://arxiv.org/abs/1904.05488}.

\bibitem[Tavanaei et~al.(2018)Tavanaei, Ghodrati, Kheradpisheh, Masquelier, and
  Maida]{spiking_neural_network}
Tavanaei, A., Ghodrati, M., Kheradpisheh, S.~R., Masquelier, T., and Maida,
  A.~S.
\newblock Deep learning in spiking neural networks.
\newblock \emph{CoRR}, abs/1804.08150, 2018.
\newblock URL \url{http://arxiv.org/abs/1804.08150}.

\bibitem[Telgarsky(2016)]{DBLP:journals/corr/Telgarsky16}
Telgarsky, M.
\newblock Benefits of depth in neural networks.
\newblock \emph{CoRR}, abs/1602.04485, 2016.
\newblock URL \url{http://arxiv.org/abs/1602.04485}.

\bibitem[Viola(2014)]{viola2014randomness}
Viola, E.
\newblock Randomness buys depth for approximate counting.
\newblock \emph{computational complexity}, 23\penalty0 (3):\penalty0 479--508,
  2014.

\bibitem[Wang et~al.(2018)Wang, Zhan, Li, Tang, Yuan, Zhao, Wen, Wang, and
  Lin]{universal_scnn}
Wang, Y., Zhan, Z., Li, J., Tang, J., Yuan, B., Zhao, L., Wen, W., Wang, S.,
  and Lin, X.
\newblock On the universal approximation property and equivalence of stochastic
  computing-based neural networks and binary neural networks.
\newblock \emph{CoRR}, abs/1803.05391, 2018.
\newblock URL \url{http://arxiv.org/abs/1803.05391}.

\bibitem[Yamamoto et~al.(2020)Yamamoto, Song, and Kim]{yamamoto2020parallel}
Yamamoto, R., Song, E., and Kim, J.-M.
\newblock Parallel wavegan: A fast waveform generation model based on
  generative adversarial networks with multi-resolution spectrogram, 2020.

\bibitem[Zhang et~al.(2020)Zhang, Liu, and Yan]{NEURIPS2020_b86e8d03}
Zhang, S., Liu, M., and Yan, J.
\newblock The diversified ensemble neural network.
\newblock In Larochelle, H., Ranzato, M., Hadsell, R., Balcan, M.~F., and Lin,
  H. (eds.), \emph{Advances in Neural Information Processing Systems},
  volume~33, pp.\  16001--16011. Curran Associates, Inc., 2020.
\newblock URL
  \url{https://proceedings.neurips.cc/paper/2020/file/b86e8d03fe992d1b0e19656875ee557c-Paper.pdf}.

\end{thebibliography}
\bibliographystyle{icml2022}

\newpage
\appendix
\onecolumn
\section{Another Solution to the $d$-Dimensional Ball Problem}
\label{appendix:tangents_ball}
Here we present a more geometric solution than the one described in Section \ref{thm:nd_ball}. The idea is to sample lines tangent to the ball, and again add a correcting coin toss. We only give the 2-d case, which can be extended to any dimension.

We use two random values. The first is a uniform angle, $\theta \sim U[0, 2\pi]$. Using this angle we construct a linear classifier tangent to a circle of radius $b$, at angle $\theta$. That is, $h(x;\theta) = \mathrm{sgn}\left((cos\theta,sin\theta)\cdot x - b\right)$. With this classifier, every point inside a ball of radius $b$, $B_b$, is always  classified as "$-1$". 
Points outside the ball $B_b$ are classified correctly if the generated classifier $h_\theta$ "faces towards" the point. The probability that this happens is proportional to the arc length between two tangent lines drawn from the point to the circle of radius $R$, as shown in figure \ref{fig:tangents_ball}. This arc has length $2\arccos{\frac{b}{\norm{x}_2}}$, which is less than $\pi$ for all $\norm{x}_2 \geq b$; i.e., all points outside the circle have probability of less than $0.5$ to be correctly classified.
We correct this by using a second variable, $t\sim Ber(\alpha)$ for some $\alpha>\frac{1}{2}$. If $t=0$, we use the constant classifier $h_+(x) = 0 \cdot x + 1$, which classifies all samples as "1". 

Now, samples outside the circle are classified correctly with probability $\Pr[h(x)=1] = (1-\alpha) + \alpha \cdot \frac{1}{\pi}\arccos{\frac{b}{\norm{x}_2}}$, which is greater than $0.5$ for $\norm{x}_2 \geq b/cos\left(\frac{\pi}{2}(1-\frac{1}{2\alpha})\right)$. Choosing $b = R\cdot cos\left(\frac{\pi}{2}(1-\frac{1}{2\alpha})\right)$, ensures that all points except the $R$-radius circle itself have probability of success greater than $0.5$, meaning we can correctly classify any sample from $\R^2$ with arbitrarily good probability by sampling enough linear classifiers and taking the majority.

However, given an input $(\theta, t)\sim U[0,2\pi]\times Ber(2/3)$, calculating the linear classifier requires calculating $sin(\theta)$ and $cos(\theta)$ which requires a deep network \cite{DBLP:journals/corr/Telgarsky16}.
As per the discussion in Section \ref{section:comput_or_sample}, we are able to replace a function which is hard to compute with a function which is easy to sample from, $\mathcal{N}(0,I_d)$. This explains the power of the classifier $\hat{h}$ from Section \ref{thm:nd_ball}.

\begin{figure}[hb]
\vskip 0.2in
\begin{center}
\centerline{\includegraphics[width=6cm]{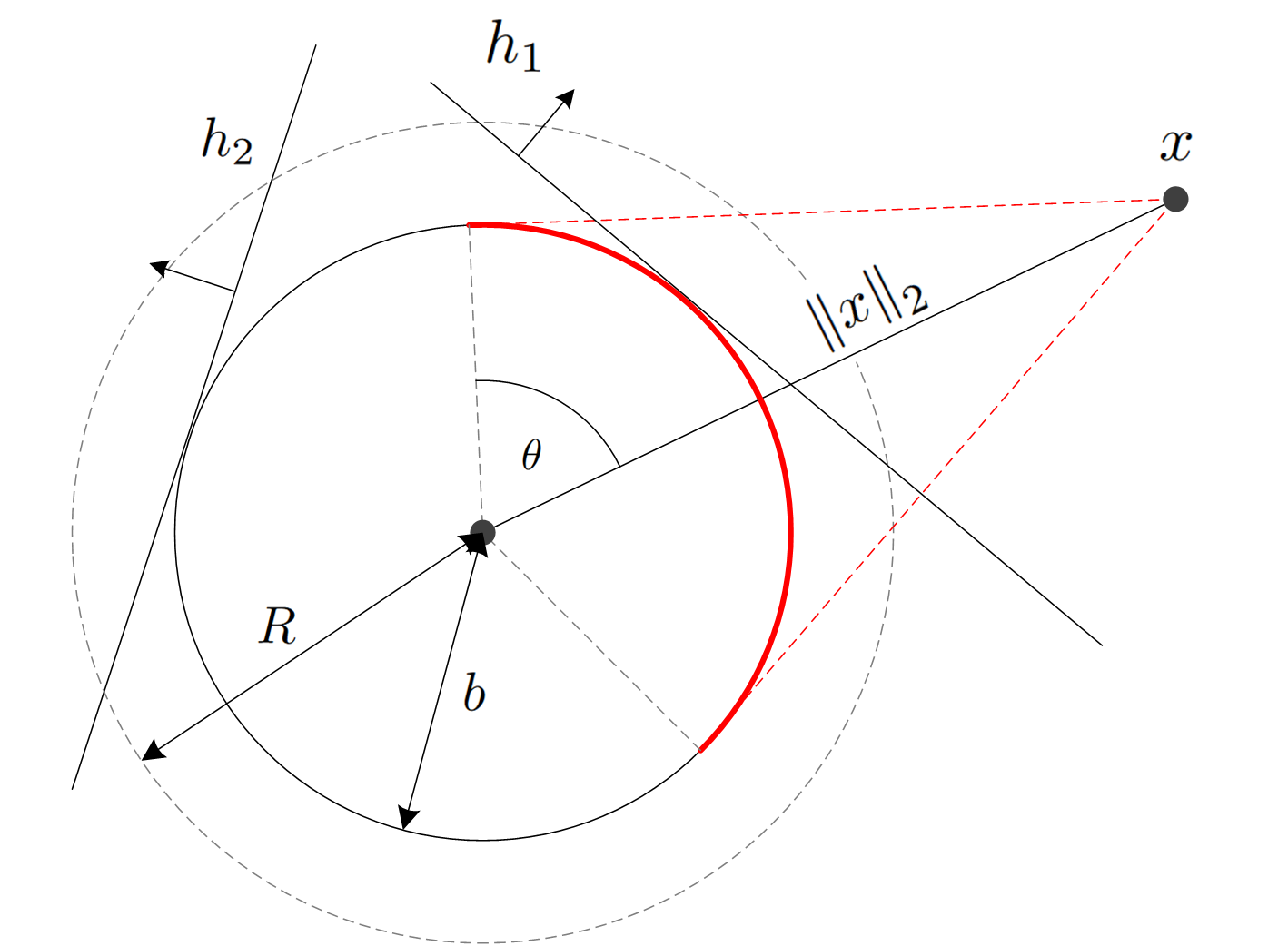}}
\caption{
    \small For some query point $x$ in $\R^2$, the tangent $h_1$ to a ball of radius $b$ forms a linear classifier. $h_1$ classifies $x$ correctly as "out of the ball".
    For the tangent classifier $h_2$ however, the point $x$ is on the wrong half-plane, so $h_2$ classifies $x$ as "inside the ball".
    All tangents $h$ which classify $x$ correctly are sampled from the arc spanned between the two tangent lines that pass through $x$, marked in red.
    The length of this arc is $2\arccos\left(\frac{b}{\norm{x}_2}\right)$. By adding a coin $t\sim Ber(\alpha)$, we can choose $b=R\cdot cos\left(\frac{\pi}{2}(1-\frac{1}{2\alpha})\right)$ such that the probability of success becomes $1/2$ exactly on the surface of a ball of radius $R$, and is greater than $1/2$ everywhere else.
}
\label{fig:tangents_ball}
\end{center}
\vskip -0.2in
\end{figure}

\section{Comparison of $\hat{h}$ with deterministic networks}
\label{appendix:comp_safran}

Comparison with \cite{DBLP:journals/corr/SafranS16} is a bit nuanced.
Their work measured accuracy as a regression problem; given some input distribution $\mu$ on $\R^d$, they define the error as MSE: $\int_{\R^d}\left(f_R(x)-\hat{h}(x)\right)^2\mu(x)dx = \norm{f_R-\hat{h}}_{L_2(\mu)}^2$.
They show that for $f_R=\sign\left(\norm{x}_2-R\right)$, and any 2-layer network $g$ approximating it \footnote{With activations satisfying some mild assumptions. Importantly, ReLU, sigmoid and threshold activations are allowed.}, there exists an input distribution $\nu:\R^d\to\R_+$ such that $\norm{f_R-g}^2_{L_2(\nu)}>\frac{c}{d^4}$ for some constant $c$, unless $g$ has width $\Omega(e^d)$.
In order to compare $g$ with $\hat{h}$ we define $\hat{h}_n(x)=\mathrm{maj}\{\hat{h}(x;u_i,t_i)\}_{i=1}^n$, i.e. we simply use the numerical value of the output of the majority of $n$ samples from $\hat{h}(x)$ as the output of $\hat{h}_n$.
We can now measure $\norm{f_R-\hat{h}_n}^2_{L_2(\mu)}$, but its value is random since $\hat{h}_n$ is random.

The simplest solution is to take the expectation of this error term and require it to be less than some $\epsilon$. Note that since $x$ and $\hat{h}$ are independent variables, this expectation is the same as the $L_2(\mu\times\gamma)$ norm of $f_R-\hat{h}_n$, where $\gamma$ represents the distribution of $\hat{h}_n$: $\E_{\hat{h}_n}\left[\norm{f_R-\hat{h}_n}^2_{L_2(\mu)}\right]=\norm{f_R-\hat{h}_n}^2_{L_2(\mu\times\gamma)}$.

\subsection{Approximability of $f_R$ with $\hat{h}_n$ in MSE}
We begine with a proof of Theorem \ref{thm:any_mu_exists_n}.

\begin{proof}
    Let $R_z=\{x\in\R^d\mid\norm{x}_2\in[R-z,R+z]\}$ for any $z>0$, and define $\phi = \sup\{z\mid\int_{R_z}d\mu\leq\epsilon/2\}$.
    In other words, we look at the thickest possible spherical shell around the surface of the sphere of radius $R$, such that the mass given to it by $\mu$ is at most $\epsilon/2$. As long as $\mu$ doesn't assign more than $\epsilon/2$ mass to the surface of the $R$ sphere itself, $\phi$ is well defined. To avoid complications, we simply disallow distributions $\mu$ with $\int_{\mathbb{S}^{d-1}(R)}d\mu>0$.
    We denote said shell as $R_\phi$.
    Next, define $\epsilon_p(\phi)=\min_{x\in\R^d\setminus R_\phi}\left\{\abs{\mathbf{p}_1^h(x)-\frac{1}{2}}\right\}$ where $\mathbf{p}^h_1(x)=\Pr\left[h(x)=1\mid x\right]=1-\alpha\Phi(\frac{b}{\norm{x}_2})$.
    Since $\mathbf{p}_1^h(x)$ is a strictly monotonicaly increasing function of $\norm{x}$, the value of $\epsilon_p(\phi)$ is $\min\left\{\frac{1}{2} - \alpha\Phi(\frac{b}{R+\phi}),\alpha\Phi(\frac{b}{R-\phi})-\frac{1}{2}\right\}$.
    
    Before we continue the proof, we state a useful lemma:
    \begin{lemma} \label{lemma:max_prob_beta}
        $\forall\beta\in(0,1]:\exists n\in\mathbb{N}$ such that $\max_{x\in \R^d\setminus R_\phi}\left\{\Pr[\hat{h}_n(x)\neq f_R(x)\mid x]\right\} \leq \beta$.
    \end{lemma}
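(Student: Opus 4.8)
The plan is to reduce the statement to a single uniform concentration bound. The key observation is that, by construction of $\hat{h}$, for every query point outside the shell $R_\phi$ the single-sample classifier already succeeds with probability bounded strictly above $\frac{1}{2}$, and that this margin is \emph{uniform} in $x$. Once the uniform margin is in hand, the majority $\hat{h}_n$ is merely a vote among $n$ i.i.d.\ Bernoulli trials, and a Hoeffding estimate drives its failure probability to zero exponentially in $n$, simultaneously over all of $\R^d \setminus R_\phi$.

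More concretely, I would first record the single-sample success probability. For $x$ with $\norm{x}_2 > R$ we have $f_R(x) = 1$ and $\Pr[\hat{h}(x) = f_R(x) \mid x] = \mathbf{p}^h_1(x)$, while for $\norm{x}_2 < R$ we have $f_R(x) = -1$ and $\Pr[\hat{h}(x) = f_R(x) \mid x] = 1 - \mathbf{p}^h_1(x)$; in either case the success probability equals $\frac{1}{2} + \abs{\mathbf{p}^h_1(x) - \frac{1}{2}}$. Since $\mathbf{p}^h_1$ is strictly monotone in $\norm{x}_2$ and equals $\frac{1}{2}$ exactly on the sphere of radius $R$, for every $x \notin R_\phi$ this margin is at least $\epsilon_p(\phi) = \min\{\frac{1}{2} - \alpha\Phi(\frac{b}{R+\phi}),\, \alpha\Phi(\frac{b}{R-\phi}) - \frac{1}{2}\} > 0$, which is exactly the constant already defined. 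Thus every fixed $x \notin R_\phi$ is classified correctly by a single draw of $\hat{h}$ with probability at least $\frac{1}{2} + \epsilon_p(\phi)$.

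Next I would pass to the majority. Fix $x \notin R_\phi$ and let $X_i = \1[\hat{h}(x; u_i, t_i) = f_R(x)]$ be independent Bernoulli indicators with common mean $q(x) \ge \frac{1}{2} + \epsilon_p(\phi)$. The event $\hat{h}_n(x) \ne f_R(x)$ is contained in $\{\frac{1}{n}\sum_{i=1}^n X_i \le \frac{1}{2}\}$, so Hoeffding's inequality yields $\Pr[\hat{h}_n(x) \ne f_R(x) \mid x] \le \exp(-2n\,\epsilon_p(\phi)^2)$. Crucially, this bound depends on $x$ only through the uniform margin $\epsilon_p(\phi)$, so it holds simultaneously for the maximum over $\R^d \setminus R_\phi$. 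Choosing any $n \ge \frac{\ln(1/\beta)}{2\,\epsilon_p(\phi)^2}$ then forces the right-hand side below $\beta$, which is the claim.

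The only genuinely delicate point is establishing that the margin is \emph{uniformly} positive over the unbounded region $\R^d \setminus R_\phi$, rather than merely positive pointwise: this is where strict monotonicity of $\mathbf{p}^h_1$ in $\norm{x}_2$ is essential, as it reduces the infimum of the margin to its values at the two shell boundaries $\norm{x}_2 = R \pm \phi$ and rules out degeneration as $\norm{x}_2 \to 0$ or $\norm{x}_2 \to \infty$. A minor bookkeeping matter is tie-breaking when $n$ is even; absorbing ties into the failure event, as done above, keeps the Hoeffding bound valid at no cost.
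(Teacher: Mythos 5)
Your proposal is correct and follows essentially the same route as the paper's proof: both reduce the supremum over $\R^d\setminus R_\phi$ to the uniform margin $\epsilon_p(\phi)$ attained at the shell boundaries via monotonicity of $\mathbf{p}^h_1$, then apply Hoeffding's inequality to the majority vote, arriving at the identical threshold $n\geq\frac{\ln(1/\beta)}{2\epsilon_p(\phi)^2}$. Your treatment of the two cases $\norm{x}_2>R$ and $\norm{x}_2<R$ and of tie-breaking is slightly more explicit than the paper's ``w.l.o.g.'' shortcut, but the argument is the same.
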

    \begin{proof}
        First note that $\max_{x\in \R^d\setminus R_\phi}\left\{\Pr[\hat{h}_n(x)\neq f_R(x)\mid x]\right\}$ is achieved when $\Pr[\hat{h}(x)=f(x)]$ is minimized; i.e., for the same points $x$ that determine the value of $\epsilon_p(\phi)$.
        Assuming w.l.o.g that $\norm{x}_2>R$ for these points, i.e. $f_R(x)=1$, and using $[\cdot]$ Iverson's Bracket, the event $\left[\hat{h}_n(x)\neq f(x)\right]$ is the same event as $\left[\mathbf{p}^h_1(x)-\frac{1}{n}\sum_{i=1}^n\hat{h}_i(x)>\epsilon_p(\phi)\right]=\left[\frac{1}{n}\sum_{i=1}^n\hat{h}_i(x)<\mathbf{p}^h_1(x)-\epsilon_p(\phi)\right]=\left[\frac{1}{n}\sum_{i=1}^n\hat{h}_i(x)<1/2\right]$, since both represent the event of getting $\hat{h}(x)=0$ more than $n/2$ times, which would result in an error after majority.
        Given $\beta\in(0,1]$ we now apply Hoeffding's inequality to find $n$ such that $\Pr[\mathbf{p}^h_1(x)-\frac{1}{n}\sum_{i=1}^n\hat{h}_i(x)>\epsilon_p(\phi)]<\beta$, i.e. $n\geq\frac{\ln(1/\beta)}{2\epsilon_p(\phi)^2}$.
    \end{proof}
    
    Since the images of $f_R$ and $\hat{h}_n$ are in $\{-1,1\}$, we have $(f_R-\hat{h}_n)^2 = \left[f_R(x)\neq\hat{h}_n(x)\right]$.
    We can now calculate the error:
    \begin{align*}
        \norm{f_R-\hat{h}_n}^2_{L_2(\mu\times\gamma)} 
        &=\int\left[f_R(x)\neq\hat{h}_n(x)\right]d\mu(x) d\gamma(\hat{h}_n) 
        = \Pr\left[f_R(x)\neq\hat{h}_n(x)\right] \\
        &= \int_{\R^d}\Pr\left[f_R(x)\neq\hat{h}_n(x)\mid x \right]d\mu(x) \\
        &= \int_{R_\phi}\Pr\left[f_R(x)\neq\hat{h}_n(x)\mid x \right]d\mu(x) + \int_{\R^d\setminus R_\phi}\Pr\left[f_R(x)\neq\hat{h}_n(x)\mid x \right]d\mu(x) \\
        &\leq \int_{R_\phi}1\cdot d\mu(x) + \int_{\R^d\setminus R_\phi}\Pr\left[f_R(x)\neq\hat{h}_n(x)\mid x \right]d\mu(x) \\
        &\leq \frac{\epsilon}{2} + \max_{x\in \R^d\setminus R_\phi}\left\{\Pr[\hat{h}_n(x)\neq f_R(x)\mid x]\right\}\int_{\R^d\setminus R_\phi}d\mu(x) \\
        &\leq \frac{\epsilon}{2}+\beta(1-\frac{\epsilon}{2})
    \end{align*}
    
    Setting $\beta=\frac{\epsilon/2}{1-\epsilon/2}$, we get $\norm{f_R-\hat{h}_n}^2_{L_2(\mu\times\gamma)} \leq\epsilon$ as long as $n\geq\frac{\ln(1-\epsilon/2) - \ln(\epsilon/2)}{2\epsilon_p(\phi)^2}$. This concludes the proof of Theorem \ref{thm:any_mu_exists_n}.
\end{proof}

\subsection{Comparison of Computational Complexity}
We've shown that $\hat{h}_n$ can approximate $f_R$ to arbitrary accuracy with $\mathcal{O}(1)$ space complexity.
Using Theorem \ref{thm:any_mu_exists_n}, now we can compare the computational complexity of $\hat{h}_n$ with the smallest deterministic 2-layer network which approximates $f_R$. Denote this deterministic network with $g(x)$.
First, we define computational complexity of a network as the number of times a neuron is used. For deterministic networks connected with no cycles, it is simply the number of neurons.
For CFNNs, it is the number of neurons times $n$, the number of times they are sampled from.

Since $\hat{h}$ has exactly 2 neurons, and we use it $\Omega(n)$ times, its computational complexity is $\Omega(n)=\Omega(\frac{\ln(1-\epsilon/2) - \ln(\epsilon/2)}{2\epsilon_p(\phi)^2})$.
The direct dependence on $\epsilon$ is logarithmic.
There is also an indirect dependence on $\mu$ through $\epsilon_p(\phi)$.

Assume $\mu$ assigns a mass of $\epsilon/2$ to a very thin spherical shell $R_\phi$, such that $\phi\propto e^{-d/2}$. Now, since $\mathbf{p}_1^h(x)$ is differentiable, we can approximate it in $\R_\phi$ using a linear function of $\norm{x}_2$, as $\phi$ is very small. Therefore $\epsilon_p(\phi)\propto\phi$, and we have $n\propto e^d$.
This implies that by using the $n$ achieved in the proof of Theorem \ref{thm:any_mu_exists_n}, and for any $\epsilon>0$, there exists an input distribution $\mu'$ such that in order for $\norm{f_R-\hat{h}_n}^2_{L_2(\mu\times\gamma)}$ to be less than $\epsilon$, the computational complexity of $\hat{h}_n$ must be $\Omega(e^d)$. This is true for \emph{any} $\epsilon$, including $\epsilon=d^{-4}$.

However, we are not done. The distribution $\mu'$ is extremely pathological, severely more concentrated than the distribution $\nu$ used in the proof of \cite{DBLP:journals/corr/SafranS16}.
In fact, $\nu$ was designed so that its Fourier transform had sufficiently large mass in high frequency regions of its spectrum. Due to properties of the Fourier transform, this translates to relatively small regions of space with high concentration of mass. Since distributions $\mu'$ (on which $\hat{h}_n$ requires exponential complexity) are even more concentrated than $\nu$, they too will have the same properties as $\nu$, that are required to show that no sub-exponential deterministic network exists that can approximate $f_R$ with $\mu'$ as the input distributions.
In other words, when $\hat{h}_n$ is exponential, so are all deterministic neural networks.

To finish our comparison, we explicitly show that on $\nu$, our network $\hat{h}_n$ performs better.\footnote{The distribution used in \cite{DBLP:journals/corr/SafranS16} was denoted $\mu$, so we changed it to $\nu$ to avoid confusion.}

First, we must re-implement $\hat{h}$ in order to reduce its depth. Remember that $\hat{h}(x)=\sign(t(ux-b-1)+1)$. This can be implemented using two random variables, $u\sim\mathcal{N}(0,I_d)$ and $t\sim Ber(\alpha)$, and two neurons as described in Section \ref{thm:nd_ball}. We can also use two other variables, $v=ut$ and $c=t(b+1)-1$, and get $\hat{h}(x)=\sign(vx-c)$ which is a single neuron. As the definition of CFNN does not limit the distribution of the random input, this holds.

The distribution $\nu$ was chosen such that any 2-layer deterministic network $g$ would be unable to approximate the function $\Tilde{g}(x)=\sum_{j=1}^N\frac{\epsilon_j}{2}(f_{a_j}(x)-f_{b_j}(x))$, where $f_{a_j},f_{b_j}$ are defined similarly to $f_R$,  $N\in \mathcal{O}(poly(d))$, and $\epsilon_j\in\{-1,1\}$.
In other words, $\norm{\tilde{g}-g}_{L_2(\nu)}>2N\sqrt{\epsilon}$ for any 2-layer network $g$.
We can approximate each $f_{a_j},f_{b_j}$ using $2N$ instances of our network $\hat{h}^{a_j},\hat{h}^{b_j}$ with parameter distributions appropriately selected, and combine their results in the second layer as $\sum_{j=1}^N\epsilon_j\left(\hat{h}^{a_j}_n(x)-\hat{h}^{b_j}_n(x)\right)$.
Note that we perform majority amplification on each neuron before the summation, so we can use Theorem \ref{thm:any_mu_exists_n}, i.e. $\exists n\in\mathbb{N}$ s.t. $\norm{f_R-\hat{h}_n}_{L_2(\nu\times\gamma)} <\sqrt{\epsilon}$. Also, this new network is a 2-layer network with $N$ neurons, with computational complexity $\Omega(nN)$.

We can now use the triangle inequality:
\begin{align*}
    \norm{\tilde{g}-\sum_{j=1}^N\epsilon_j\left(\hat{h}^{a_j}_n(x)-\hat{h}^{b_j}_n(x)\right)}_{L_2(\nu\times\gamma)} 
    \leq \sum_{j=1}^N\abs{\epsilon_j}\left(\norm{f_{a_j}-\hat{h}^{a_j}_n}_{L_2(\nu\times\gamma)}+\norm{f_{b_j}-\hat{h}^{b_j}_n}_{L_2(\nu\times\gamma)}\right)
    \leq 2N\sqrt{\epsilon}
\end{align*}

We now only need to show that the number of samples $n$ required is polynomial in $d$. This requires examining the distribution $\nu$ as described in \cite{DBLP:journals/corr/EldanS15} on which the work in \cite{DBLP:journals/corr/SafranS16} is based.
The important notes about $\nu$ are: it is only a function of $\norm{x}_2$; it is nonzero on at most $\mathcal{O}(N)$ intervals $\Delta_j=[a_j,b_j]$; and it is constant up to a polynomial factor, i.e. $\sup_{x\in\Delta_j}\{\nu(x)\}\leq\inf_{x\in\Delta_j}\{\nu(x)\}\cdot(1+poly(d^{-1}))$ for some polynomial $poly(d^{-1})$.
Another important note is that $b_j-a_j$ are in $\mathcal{O}(1/N)$.
As was shown in the discussion above, $n$ will be exponential only if the distribution $\nu$ were to assign a mass of $\epsilon/2$ to a very thin spherical shell $[a_j,a_j+\phi]$ such that $\phi\propto e^{-d/2}$.
However, this is not the case. Assume by contradiction that $\exists m\leq N: \int_{[a_m,a_m+\phi]}d\nu(x)=\epsilon/2$ and $\phi\propto e^{-d}$:
\begin{align*}
    &\int_{[a_m,a_m+\phi]}d\nu(x)=\frac{\epsilon}{2}\leq\sup_{x\in\Delta_m}\{\nu(x)\}\phi\leq\inf_{x\in\Delta_m}\{\nu(x)\}\left(1+poly(d^{-1})\right)\phi \\
    &\Rightarrow \inf_{x\in\Delta_m}\{\nu(x)\}\geq\frac{\epsilon}{2\left(1+poly(d^{-1})\right)\phi}\\
    &\Rightarrow \int_{\R^d}d\nu(x)=\sum_{j=1}^N\int_{\Delta_j}d\nu(x) \geq \sum_{j=1}^N\left(b_j-a_j\right)\inf_{x\in\Delta_j}\{\nu(x)\}\geq \left(b_m-a_m\right)\inf_{x\in\Delta_m}\{\nu(x)\} \\
    &\Rightarrow \int_{\R^d}d\nu(x) \geq \frac{\left(b_m-a_m\right)\epsilon}{2\left(1+poly(d^{-1})\right)\phi}
\end{align*}

Since $\phi\propto e^{-d/2}$, and $\left(b_m-a_m\right)\in\mathcal{O}(1/poly(d))$, then for sufficiently large $d$ we get $\frac{\left(b_m-a_m\right)\epsilon}{2\left(1+poly(d^{-1})\right)\phi}\propto e^d\cdot\epsilon/poly(d) > 1$, i.e. $\int_{\R^d}d\nu(x)>1$ and $\nu$ is not a distribution.
Hence no such $m$ exists, i.e. $\phi^{-1}$ is at most polynomial in $d$. The asymptotic behavior of $\phi$ is thus $\Omega(\epsilon/poly(d))$, and since $\epsilon_p(\phi)\propto\phi$ we have that $n$ is $\Omega(poly(d)\cdot\epsilon^{-2}\ln(1/\epsilon))$.

To conclude, we've shown that for the distribution $\nu$ and function $\tilde{g}$, even though no deterministic 2-layer network can approximate $\tilde{g}$ to a better accuracy than $\epsilon=d^{-4}$ unless it has an exponential width, the network $\hat{h}$ can be used to approximate $\tilde{g}$ to any accuracy $\epsilon$ such that $\epsilon\in\Omega(poly(d^{-1}))$, with only 2 layers, $N=poly(d)$ neurons, and computational complexity $\Omega(poly(d))$.

\section{Proof of Theorem \ref{thm:strong_gens}}
\label{appendix:full_universality}

In Section \ref{section:theorem} we've shown the first step of the proof, for the 1-d case. First we extend Theorem \ref{thm:sep-1d} to $d$ dimensions.

\begin{definition} (Separates in Probability)
    Given a function $f:\R^d\to\R$ and a distribution $\gamma$ of classifiers $h$, we say that $h\sim\gamma$ \textbf{\emph{separates $f$ in probability}} if $\forall x\in\R^d,y\in\R: \Pr_{h\sim\gamma}\left[h(x_1, ..., x_d ,y)=\sgn(f(x)-y)\mid x, y\right] > 0.5$ and $\Pr_{h\sim\gamma}\left[h(x_1, ..., x_d, f(x))=1\mid x\right] = 0.5$.
\end{definition}
$x_i$ is the $i$-th coordinate of $x$.

\begin{theorem}\label{thm:sep-nd}
Let $f:\R^n\to\R$ be a $K-Lipschitz$ function. Then there is a distribution $\gamma$ of $L_1$-cones that separates $f$ in probability.
\end{theorem}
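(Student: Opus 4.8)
The plan is to mirror the one-dimensional argument of Theorem~\ref{thm:sep-1d} almost verbatim, replacing the two slope-$\pm K$ lines that bound the planar cone $C_K(p)$ by the two boundary tests of an \emph{$L_1$-cone} in $\R^{d+1}$, each realized as a $2$-layer network of width $2d$. For a point $p=(t,f(t))$ on the graph of $f$, I define the double $L_1$-cone $C_K(p)=\{(x,y)\in\R^{d}\times\R : \abs{y-f(t)}\ge K\norm{x-t}_1\}$ and the two classifiers $A^p(x,y)=\sgn\bigl(y-f(t)-K\norm{x-t}_1\bigr)$ and $B^p(x,y)=\sgn\bigl(y-f(t)+K\norm{x-t}_1\bigr)$. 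Since $\norm{x-t}_1=\sum_{i=1}^d\abs{x_i-t_i}$ and $\abs{z}=\mathrm{ReLU}(z)+\mathrm{ReLU}(-z)$, each of $A^p,B^p$ computes its $d$ absolute values with $2d$ hidden ReLU units and then thresholds an affine combination of them (with the input coordinate $y$ entering the output layer), giving the advertised width $2d$. The random classifier $h^p$ picks $A^p$ or $B^p$ with equal probability.

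First I would establish the $d$-dimensional analogue of Lemma~\ref{lemma:kcone}: for $p=(t,f(t))$, a point $q=(x,y)$ in the \emph{upper} $L_1$-cone of $p$ has $\Pr[h^p(q)=1\mid q]=1$; a point in the lower cone has $\Pr[h^p(q)=1\mid q]=0$; and a point outside $C_K(p)$ has $\Pr[h^p(q)=1\mid q]=\tfrac12$. This follows from a three-region case analysis: if $y-f(t)\ge K\norm{x-t}_1$ then $A^p(q)=B^p(q)=+1$; if $f(t)-y\ge K\norm{x-t}_1$ then both equal $-1$; and if $\abs{y-f(t)}<K\norm{x-t}_1$ then $A^p(q)=-1$ while $B^p(q)=+1$, so the coin between them yields each label with probability $\tfrac12$. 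I would flag here that naively sampling uniformly among the $2^d$ facet-hyperplanes of the $L_1$-cone would \emph{not} produce probability $\tfrac12$ outside the cone; using the two cone-membership nets $A^p,B^p$ is precisely what reproduces the clean disagreement structure of the planar case, and this is the step I expect to be the main obstacle.

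Next I would define $\gamma$ exactly as before: sample $t\sim\zeta$ from the support of $f$, set $p=(t,f(t))$, and sample $h^p$; call the induced law on $p$ the measure $\zeta_p$. Fix a query $q=(x,y)$ with $y>f(x)$ (the case $y<f(x)$ is symmetric). The key geometric step is the $L_1$-Lipschitz analogue of Figure~\ref{fig:cone_viz:nonk}: every graph point $p=(t,f(t))$ with $q\in C_K(p)$ must lie in the upper cone, i.e.\ $f(t)<y$. Indeed, were $q$ in the lower cone of $p$ we would have $f(t)-y\ge K\norm{x-t}_1\ge\abs{f(x)-f(t)}\ge f(t)-f(x)$, forcing $y\le f(x)$ and contradicting $y>f(x)$. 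Hence every such $p$ falls into the first case of the lemma and $h^p$ outputs $1$ with probability $1$, while the remaining graph points each contribute exactly $\tfrac12$. Writing $P_b(q)$ and $P_o(q)$ for these two sets, the identical integral computation as in Theorem~\ref{thm:sep-1d} gives $\Pr_{h\sim\gamma}[h(q)=1\mid q]=\tfrac12\int_{P_o}d\zeta_p+\int_{P_b}d\zeta_p>\tfrac12$, where strictness uses $\int_{P_b}d\zeta_p>0$, guaranteed by $\supp\{f\}\subseteq\supp\{\zeta\}$. On the graph itself ($y=f(x)$), the Lipschitz bound forces $q$ onto the boundary of every cone, so every $p$ contributes $\tfrac12$ and the probability is exactly $\tfrac12$, matching the definition of separation in probability.

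A secondary point to get right is matching the Lipschitz norm to the $L_1$-cone. If $f$ is $K$-Lipschitz only in the Euclidean norm, then since $\norm{\cdot}_1\ge\norm{\cdot}_2$ it is also $K$-Lipschitz in $\norm{\cdot}_1$, so the contradiction argument above goes through with the same constant $K$ and with no change to the construction; more generally one absorbs the norm-equivalence factor into $K$. Everything else (the definition of $\gamma$, the final integral, and the boundary case) is a direct transcription of the one-dimensional proof, so the entire difficulty is concentrated in the per-point lemma and its exact $\tfrac12$ calibration outside the cone.
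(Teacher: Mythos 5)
Your proposal is correct and follows essentially the same route as the paper, which simply replaces the planar cone of Theorem~\ref{thm:sep-1d} by the $L_1$-cone $C_K^1(p)$ and invokes Lemma~\ref{lemma:kcone} verbatim; your pair $A^p,B^p$ is a valid realization of the two agreeing/disagreeing boundary tests, and your Lipschitz contradiction argument is exactly the $d$-dimensional analogue of Figure~\ref{fig:cone_viz:nonk}. You actually supply more detail than the paper does (the explicit three-region case analysis, the $\norm{\cdot}_1$ versus $\norm{\cdot}_2$ Lipschitz calibration, and the boundary case $y=f(x)$), all of which is consistent with the paper's construction.
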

An $L_1$-cone with origin $p$ is $C_K^1(p)=\left\{x\in\R^{d+1}\mid K\cdot\norm{(x_1, ..., x_d) - (p_1, ..., p_d)}_1 \leq \abs{x_{d+1}-p_{d+1}}\right\}$.

\begin{proof}
    The proof is very similar to that of Theorem \ref{thm:sep-1d}, except we replace $C_K(p)$ with an $L_1$ cone of slope $K$, $C_K^1(p)$.
    This cone shares the properties presented in Lemma \ref{lemma:kcone}, hence the proof is immediately applicable.
\end{proof}

\label{nl1cone}
\paragraph{$L_1$-cone Network} The base network $N$ must now be an implementation of this cone. That can be done in 2 layers and ReLU activations; the first layer computes $o_i^+=\max(0, Kx_i-Kp_i)$ and $o_i^-\max(0,Kp_i-Kx_i)$, i.e. two neurons for every input dimension.
Their sum is $o_i^++o_i^-=\abs{Kx_i-Kp_i}$ which is then used in the second layer to compute $\sign\left((o_{d+1}^++o_{d+1}^-)-\sum_{i=1}^d (o_i^++o_i^-)\right)=\sign\left(\abs{x_{d+1}-p_{d+1}}-\norm{x_{1...d}-p_{1...d}}_1\right)$. Note that $K$ and $Kp_i$ are the weights and the biases of the first layer. $N$ has $2d+1$ neurons.

We use $L_1$-cones since we were unable to find a distribution of hyperplanes (linear classifiers in $\R^{d+1}$) which shared the properties of Lemma \ref{lemma:kcone}. The naive idea is best explained with $d=3$: given a point on the surface of the function, $p=(x_0, x_1, f(x))$, we sample a direction $v=(Kcos(\theta), Ksin(\theta), 1)$ and use the hyperplane $h^p_\theta(x)=v\cdot x - v\cdot p$. This is the same as sampling the generating lines of an $L_2$-cone whose origin is $p$, $C_L^2(p)$. However, this cone does not have the properties described in Lemma \ref{lemma:kcone}.

To see why, consider a point $q$ with some $q_3>p_3$, outside of $C_L^2(p)$. We observer the hyperplane $\left\{(x, y, q_3)\mid x, y \in \R\right\}$, i.e. the surface of height $q_3$ along the 3rd axis and parallel to the first two axes.
This surface contains $q$, and intersects $C_K^2(p)$ on a disk with some radius $r(\abs{q_3-p_3})$.
Determining if $q\in C_K^2(p)$ then reduces to determining if it is inside this disk.
As we've already seen in Section \ref{thm:nd_ball}, the distribution of linear classifiers that classifies the ball has a probability function which is not constant inside or outside of the ball. Hence Lemma \ref{lemma:kcone} does not apply.

In Theorem \ref{thm:sep-1d} we've used a cone $C_K(p)$ defined by two linear classifiers, $h^{p+}, h^{p-}$. This cone is also an $L_1$-cone, so we can continue with the full proof of Theorem \ref{thm:strong_gens} for arbitrary $d$.

\begin{theorem}\label{thm:cls-nd}
Let $f:\R^d\to\{-1,1\}$ be a function such that there exists a K-Lipschitz function $f':\R^d\to\R$ such that $\forall x\in\R^d:\sgn(f'(x))=f(x)$.
Then there is a distribution $\gamma$ of $L_1$-cones that classifies $f$ in probability.
\end{theorem}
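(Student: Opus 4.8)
The strategy is to obtain the classifying distribution directly from the separating distribution of Theorem~\ref{thm:sep-nd}, read off along the slice $y=0$. The hypothesis supplies a $K$-Lipschitz function $f':\R^d\to\R$ with $\sgn(f'(x))=f(x)$ everywhere, so Theorem~\ref{thm:sep-nd} applies verbatim to $f'$ and yields a distribution $\gamma$ of $L_1$-cones that separates $f'$ in probability. By the definition of separation, this means $\Pr_{h\sim\gamma}[h(x_1,\dots,x_d,y)=\sgn(f'(x)-y)\mid x,y] > \tfrac12$ for every pair $(x,y)$ with $y\neq f'(x)$, while at $y=f'(x)$ the probability is exactly $\tfrac12$.

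The one point that needs checking is that the query slice $y=0$ lands in the strict-inequality regime rather than on the degenerate boundary $y=f'(x)$, where the separation probability is only $\tfrac12$. This is exactly where the sign hypothesis is used: since $f(x)\in\{-1,1\}$ and $\sgn(f'(x))=f(x)$, we must have $f'(x)\neq 0$ for every $x\in\R^d$. Hence $y=0\neq f'(x)$ for all $x$, and instantiating the separation inequality at $y=0$ gives $\Pr_{h\sim\gamma}[h(x_1,\dots,x_d,0)=\sgn(f'(x))\mid x] > \tfrac12$.

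Finally I would substitute $\sgn(f'(x))=f(x)$ to conclude $\Pr_{h\sim\gamma}[h(x_1,\dots,x_d,0)=f(x)\mid x] > \tfrac12$ for every $x$; that is, feeding the query point $x$ augmented with the coordinate $y=0$ into a cone drawn from $\gamma$ returns the correct label with probability exceeding $\tfrac12$. This is precisely the statement that $\gamma$ classifies $f$ in probability, since the likeliest label---and therefore the infinite-sample majority of Definition~\ref{def:random_acc}---equals $f(x)$ at every $x$. The proof is thus essentially a one-line corollary of Theorem~\ref{thm:sep-nd}; the only genuine content is the observation that the sign condition rules out $f'(x)=0$, which keeps the evaluation off the boundary $y=f'(x)$. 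I expect no real obstacle beyond making this boundary argument explicit and recording that the $L_1$-cones of Theorem~\ref{thm:sep-nd} act on $\R^{d+1}$, so the final classifier is the induced map $x\mapsto h(x_1,\dots,x_d,0)$.
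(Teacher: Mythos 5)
Your proposal is correct and follows essentially the same route as the paper: take the separating distribution from Theorem~\ref{thm:sep-nd} for $f'$ and evaluate each sampled cone on the slice $y=0$, using $\sgn(f'(x))=f(x)$ to conclude. Your explicit observation that $f'(x)\neq 0$ keeps the query off the degenerate boundary $y=f'(x)$ is a detail the paper leaves implicit, but it does not change the argument.
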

\begin{proof}
Let $\gamma'$ be a distribution of $L_1$ cones that separates $f'$ in probability, which exists by Theorem \ref{thm:sep-nd}.
Define $\gamma$ by sampling $h'\sim\gamma'$ and calculating $h(x)=h'(x_1, ..., x_d, 0)$. Note that $h(x)$ is a linear separator in $\R^d$, hence $\gamma$ is well defined.
Verifying that $\gamma$ classifies $f$ in probability, we get:
\begin{align*}
    &\Pr_{h\sim\gamma}\left[h(x)=f(x)\mid x\right] = \int_h\left[h(x)=f(x) \mid x\right]d\gamma \\
    &= \int_{h'}\left[h'(x,0)=f(x) \mid x\right]d\gamma' = \int_{h'}\left[h'(x,0)=\sgn(f'(x)) \mid x\right]d\gamma'\\ 
    &= \Pr_{h'\sim\gamma'}\left[h'(x,0)=\sgn(f'(x)) \mid x\right] > \frac{1}{2}
\end{align*}
We've used the fact that $f(x)=\sgn(f'(x))$, and the last inequality is true since $\gamma'$ separates $f$ in probability.
$[\cdot]$ is Iverson's Bracket.
\end{proof}

We can now complete our proof:
\begin{proof} [Proof of Theorem \ref{thm:strong_gens}]
Since $f'$ is continuous, the sampling procedure defined in \ref{thm:sep-1d} is a continuous function $T:\R^d\to\R^{d+1}$ with $T(t)=Kp=K\left(t_1, t_2, ..., t_d, f(t)\right)$. The function $T$ computes the parameters of a \hyperref[nl1cone]{network $N$ which computes $C_K^1(p)$.}
$K$ is the Lipschitz constant of the function $f'$.
Given a distribution $\zeta$ over $\R^d$ with $\supp\{f\}\subseteq\supp\{\zeta\}$, the pushforward measure $\gamma'=\zeta\circ T^{-1}$ is a distribution of $L_1$-cones.

Now, given a query point $q\in\R^{d+1}$, recall the definition of $P_b(q)$ as the set of points $p\in\R^{d+1}$ on the surface of $f(x_1, ..., x_d)$ such that $q\in C_K^1(p)$.
Let $A(q)=T^{-1}(P_b(q))=\left\{(p_1,...,p_d)\mid p\in P_b(q)\right\}\subseteq\R^d$. By construction, $A(q)\subseteq\supp\{f\}$, hence $A(q)\subseteq\supp\{\zeta\}$ from the requirement on $\zeta$.

By change of variables, we have $\int_{P_b(q)}d\gamma'(p) = \int_{A(q)}d\zeta(t) > 0$. Since this is the only requirement on $\gamma'$ to separate $f'$ in probability, we immediately get a distribution $\gamma$ of $L_1$ cones which classifies $f$ in probability, by Theorem \ref{thm:cls-nd}.

Since $T$ is a continuous function, and from universality of neural networks \cite{universality_in_depth}, there exists a neural network $G$ which approximates $T$ to arbitrary precision.
Hence, for $r\sim\zeta$, and using a base network $N$ as defined above, we have $N(x;G(r))\sim\gamma$. In other words, $N(x;G(r))$ classifies $f$ in probability.
\end{proof}

\subsection{Constraints on $f$} \label{appendix:constraint_on_f}
The constraint the $f$ have an associated K-Lipschitz function $f'$ with the same sign is ill-posed. Let $\mathcal{X}\subset\R$ be the set of points which are the boundaries of regions with constant value of $f$. On these points $f'(x)=0$, since its sign is different on either side of said boundaries and it must be continuous.
This means that we cannot use the fact that $\gamma$ separates $f'$ to calculate $f$ on $\mathcal{X}$, since $\Pr[h(x,0)=1\mid x\in\mathcal{X}]=0.5$, and no amplification is possible.
We can either require that $f(x)=0$ on $\mathcal{X}$, or exclude these points from consideration by demanding that for any input distribution $\mu$ we have $\int_{\mathcal{X}}d\mu(x)=0$.

Now that $f'$ is well defined, an implication of it being K-Lipschits is that $f$ cannot be 'too-oscillatory'. The constant $K$ can be thought of as representing the frequency of oscillations of $f$, since $f'$ can be found as a Fourier series. The fact $f'$ exists then implies that the frequency spectrum of $f$ has most of its mass below some frequency, otherwise such a series could not use a finite number of summands, whereby $f'$ would not be Lipschitz.
\section{Multi-class CFNNs and Randomness}
\label{appendix:delta}

We hypothesize that as a classification problem has more classes $C$, CFNNs become easier to train than their deterministic equivalents. For example, a CFNN that on any input $x$ outputs all classes with almost-equal probability $\frac1C-\epsilon$, except a slightly higher probability for the correct class $\frac1C+(C-1)\epsilon$, has a perfect Random Accuracy. 
A CFNN can learn a probability distribution with only a slight bias to output the correct classification, which with more classes could be easier to accomplish; there is more "room for error".

to quantify this we define $\delta_N(x) = \mathbf{p}_{f(x)}(x) - \max_{j\neq f(x)}(\mathbf{p}_j(x))$, which effectively measures "how random" is the output of the network. The function $\delta_N$ evaluates how much probability mass the a network has invested in classes other than the predicted class. If $\delta_N(x)>0$, then that input will be considered as correctly classified in random accuracy terms.

Denote $\hat{\delta}_N(x)$ as the empirical estimate of $\delta_N(x)$.
Now we have $\widehat{RA}=\frac{1}{\abs{S}}\sum_{x\in S}[\hat{\delta}_N(x)>0]$, where $[\cdot]$ is Iverson's bracket. By plotting the values of $\hat{\delta}_N$ for a given dataset in decreasing order, we can observe the empirical random accuracy as the value of the $x$-axis intercept (Figure \ref{fig:delta_graph}).

As a CFNN behaves more randomly, this graph would be more horizontal; when the CFNN has greater accuracy, its $x$ intercept would move to the right.
This visualization is a great tool to analyze the performance of CFNNs, which is useful for the following ablations in Appendix \ref{appendix:hypernet_ablations}.

\begin{figure}[t]
\vskip 0.2in
\begin{center}
\centerline{\includegraphics[width=8cm]{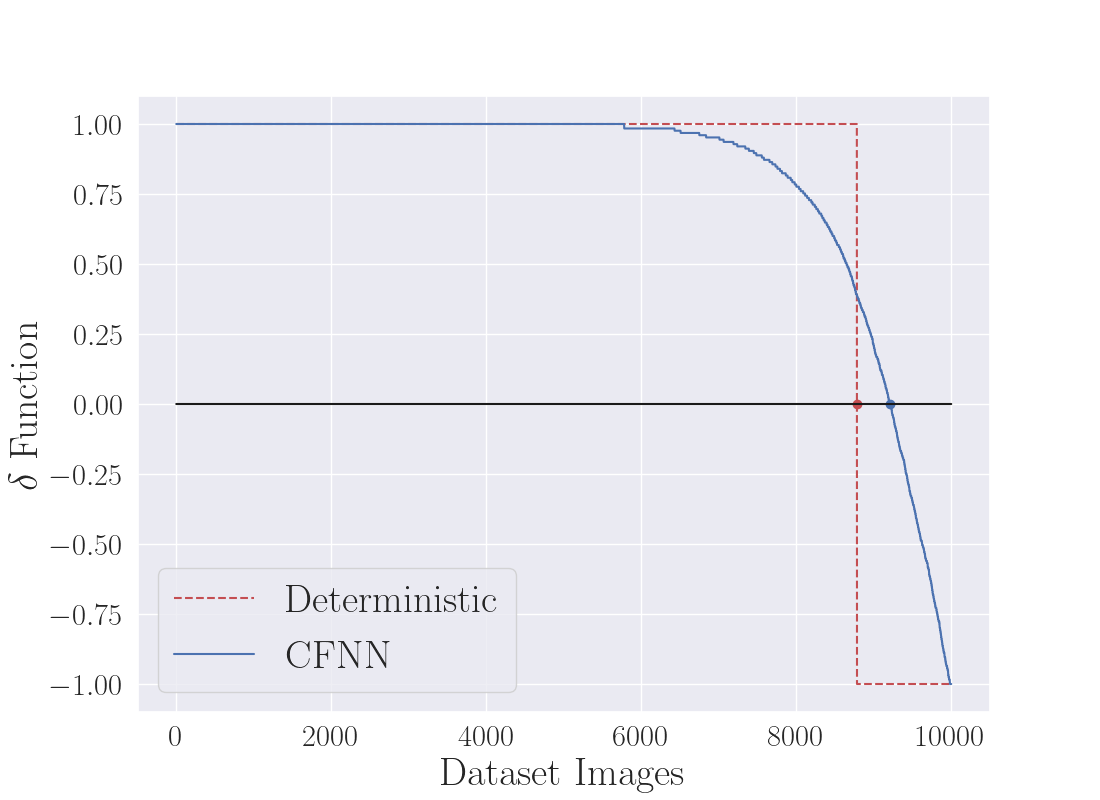}}
\caption[asd]{
    \small Graph of $\hat{\delta}_N(x)$ of a deterministic and a CFNN network. The $x$-axis are images from CIFAR10 test set.
    Since a deterministic network's $\delta$ function is in $\{1, -1\}$, its graph is a step function.
    The $\delta$ of a CFNN network however can take arbitrary values in $[-1,1]$.
    The $x$-axis intercept is the networks accuracy over the dataset. For the deterministic network shown, it is 87.8\%; for the CFNN it is 91.8\%.
    This graph is a visualization of the constraint on the search space of deterministic NNs, which is removed for CFNNs.
    The curve of the graph is drawn by sorting the images such that $\delta$ is decreasing. The ordering of the images are different between the two curves.
}
\label{fig:delta_graph}
\end{center}
\vskip -0.2in
\end{figure}

\section{Experiments - Hypernetwork CFNNs}
\label{appendix:hypernet_ablations}
\label{appendix:hypernet_details}
\paragraph{Implementation Details} As illustrated in Figure \ref{fig:hypernetwork_cfnn}, our Hypernetwork CFNN implements the the two-stage sampling procedure inspired by Section \ref{section:theorem}. To be precise, our implementation performs the following,
\begin{align*}
    e(x) &= N_\text{e}(x;\theta_1)\\
    \hat e(z_1,\bar y) &= G_\text{e}(\{z_1,\bar y\};\phi_1)\\
    \theta_2(z_1,z_2,\bar y) &= G_\text{h}(\{z_2,\bar y,\hat e(z_1,\bar y)\};\phi_2)\\
    o(x,z_1,z_2,\bar y) &=N_\text{h}(e(x);\theta_2(z_1,z_2,\bar y))\\
    \Tilde{\mathbf{p}}(x) &= \mathbb{E}_{z_1,z_2\sim U([0,1]^m), \bar y \sim \text{Cat}(C, \frac1C)}
    \left[ \Tilde{I}\left(o(x,z_1,z_2,\bar y))\right)\right] \\
    \text{Loss} &= \text{CEL}(\Tilde{\mathbf{p}}(x), y).
\end{align*}

where $\Tilde{I}$ is an approximate indicator and $\{\cdot\}$ is the concatenation operator. 

For $N_\text{e}$, we use ResNet of depth 6\textbackslash10 for CIFAR10\textbackslash100 respectively, where the final linear layer outputs an embedding vector $e$ of size 256. For $G_\text{e}$, we use a one dimensional equivalent to ResNet-6, with a random input $z_1$ sampled uniformly from $[0,1]^{256}$. The categorical class input $\bar y \sim \text{Cat}(C,\frac1C)$ is represented via a one hot vector of length 10\textbackslash100 for CIFAR10\textbackslash100 respectively. The one hot vector is passed to a linear layer, yielding a class embedding of size 256, which is concatenated to the random seed $z_1$ as an additional channel. $G_\text{h}$ applies a similar one dimensional ResNet-6, that takes a random seed $z_2$ sampled uniformly from $[0,1]^{256}$, a class one hot vector $\bar y$ (processed as in $G_e$ to an embedding of size 256) and the sampled embedding $e$ as 3 channels of size 256. $G_h$ outputs $\theta_2=[w,b]$ using two linear heads. Finally, $N_\text{h}$ applies $N_\text{h}(e)=w\cdot e-b$. All ResNets have 16 channels on their first residual block, multiplied by 2 every residual block as usual.

\begin{table}[t]
\caption{\textbf{Hypernetwork CFNN Ablation Study} for the CIFAR10 and CIFAR100 benchmarks. Performance is measured in Random Accuracy (\%). The average weight experiment shows whether there is value in using a distribution of weights rather than the expected weight. The "frozen" $N_e$ experiment emphasizes that $N_e$ does not hold all the computational power. Although the the differences are small, randomness still appears to add value in some cases, especially when the problem has a large number of classes as in CIFAR100. }
\label{tbl:hypernet_ablation_results}
\vskip 0.15in
\begin{center}
\begin{small}
\begin{sc}
\begin{tabular}{l|ccccr}
\toprule
&\multicolumn{4}{c}{CIFAR10} \\
$L$ & CFNN & Average Weight & Frozen $N_e$ & Det \\
\midrule
2  & 44.05 & 44.42 & 43.87& 42.14\\
6  & 81.45 $\pm$ 0.24 & 81.86 & 81.46 & 79.71 $\pm$ 0.52 \\
10 & 89.37& 89.37 &89.14  & 88.41 \\
14 & 91.81& 91.81 & 91.90  & 91.42 \\
20  & 92.37 & 92.39 & 92.49 & 92.89   \\
\midrule
&\multicolumn{4}{c}{CIFAR100}\\
\midrule
6 & 45.29  &44.82 &43.09 & 44.23 \\
10 & 61.24$\pm$ 0.20  &61.25 &
59.10& 58.09 $\pm$0.22\\
14 & 66.48 &66.38 &65.94 &	65.16\\
20 & 69.26 &69.29 &69.02 &	67.68\\
56 & 72.00 &72.01 &71.65 &	72.23\\ 
\bottomrule
\end{tabular}
\end{sc}
\end{small}
\end{center}
\vskip -0.1in
\end{table}

\paragraph{Training Hyperparameters} Our CFNN is optimized using SGD with Cross Entropy loss, momentum=0.9 and $L_2$ weight decay of $5\times10^{-3}$, performing majority as described in Section \ref{section:training}. We train our models for 200 epochs with batch size 128. We apply a cosine learning rate scheduler with an initial learning rate 0.1. For CIFAR10, we use $n=25$ samples and for CIFAR100 we use $n=125$ samples. For comparison we train equivalent deterministic models with the exact same setting, only replacing the generated parameters $\theta_2$ with a learned linear layer. 
Little to no optimization of the hyperparameters of the training process was performed.

\begin{figure*}[t]
\vskip 0.2in
\begin{center}
    \begin{subfigure}{0.35\textwidth}
         \centerline{\includegraphics[width=7cm]{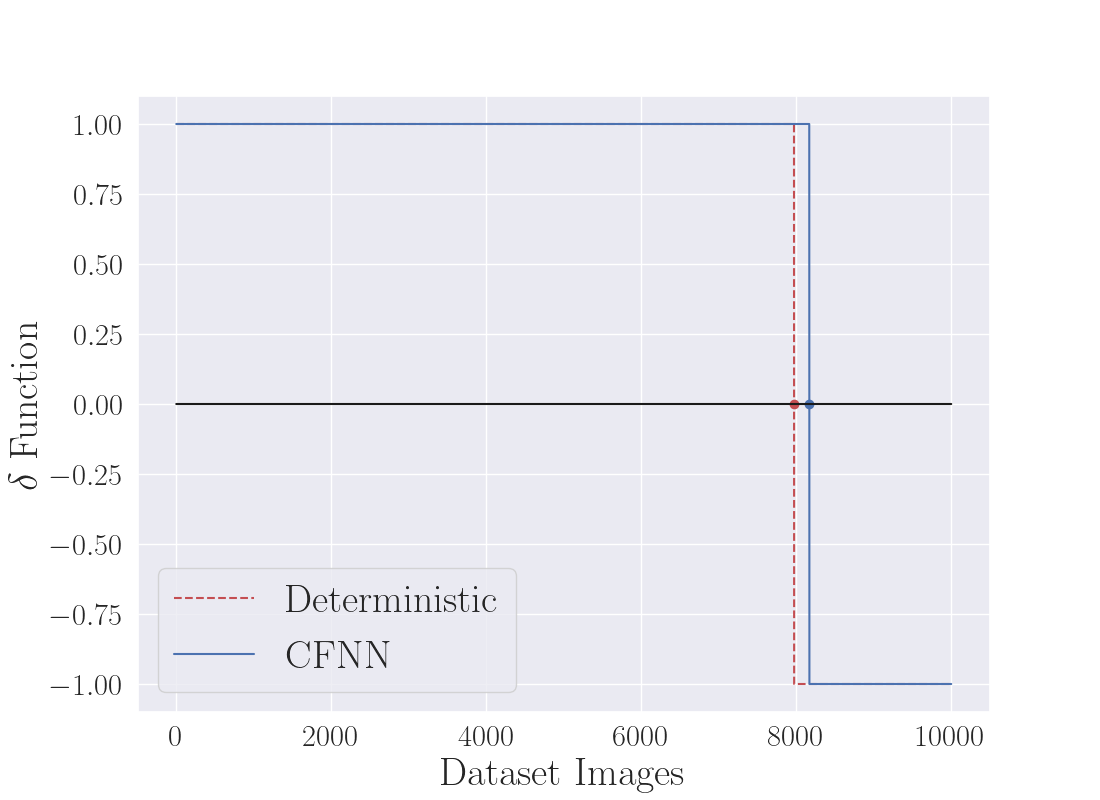}}
        \caption{}
        \label{fig:hypernet:delta:cifar10}
    \end{subfigure}
    \begin{subfigure}{0.35\textwidth}
         \centerline{\includegraphics[width=7cm]{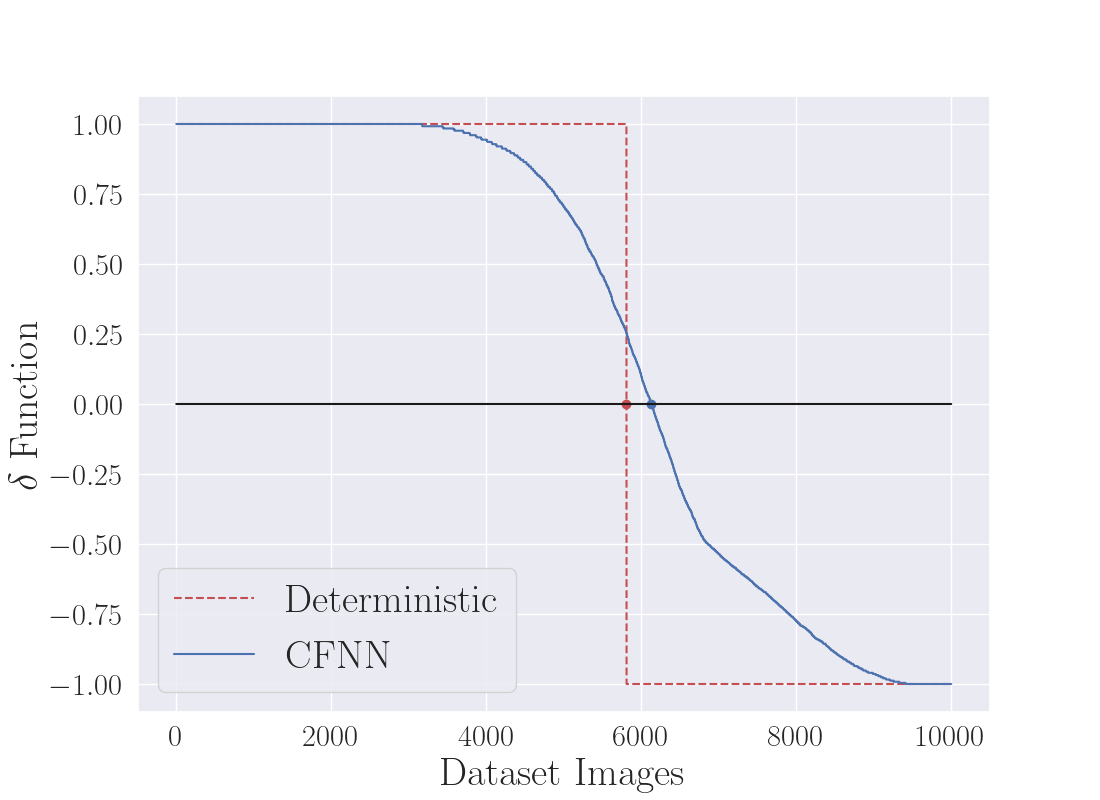}}
        \caption{}
        \label{fig:hypernet:delta:cifar100}
    \end{subfigure}
    \caption{\small (a) The $\hat\delta(x)$ function as defined in Appendix \ref{appendix:delta} for our Hypernetwork CFNN with $N_e$ depth $D=6$ on CIFAR10. The performance improvement over the deterministic equivalent is not large but statistically significant. (b) The $\hat\delta(x)$ function for our Hypernetwork with $N_e$ depth $D=10$,  CFNN on CIFAR100. It is clear that a problem with more classes benefited the CFNN.}
    \label{fig:hypernet_delta}
\end{center}
\vskip -0.2in
\end{figure*}

\paragraph{Ablation Study} We perform 2 additional experiments to ensure the perceived improvement in performance indeed results from random behavior. First, for each CFNN we compute the average weight sampled from the generator $\bar \theta_2 = \E_{z_1,z_2\sim\mathcal{Z}, \bar y\sim\text{Cat}(C,\frac1C)}\left[\theta_2(z_1,z_2,\bar y)\right]=\E\left[G_\text{h}(\{z_2,\bar y,\hat e(z_1,\bar y)\};\phi_2)\right]$, and use it instead of sampling from the generator, yielding a deterministic model.
A CFNN that has lower performance when using the average weight, has learned to gain performance purely from variance of weights. 

For our second experiment, we take the embedding network $N_e$ that was trained with the full hypernetwork model, add a standard FC linear layer in place of $N_h$ and the generators.
Training this new layer while keeping the weights of $N_e$ itself frozen,
measures if the embedding learned by $N_e$ is easily separable by a linear classifier.

Results are summarized in Table \ref{tbl:hypernet_ablation_results}.
The hypernetwork model has mostly learned a highly concentrated distribution of weights, with the only exception being for $D=6$ on CIFAR100.

These results suggest that the hypernet model learned a highly concentrated distribution, which allows replacing it with a constant parameter. This can also be seen by observing Figure \ref{fig:hypernet_delta} where on CIFAR10 the delta is very close to being a step function, as if the CFNN were entirely deterministic. On CIFAR100 the graph is more curved, but still the average of the distribution is a very good parameter for $N_h$.
However, this parameter is not found using standard SGD, as can be seen from the results using a frozen $N_e$.

When observing the results in the Dropout experiment in Section \ref{expr:dropout}, the equivalent for the average weight ablation is to replace the Dropout layer with its mean mask, i.e. a constant $p$. This is exactly what is done during standard inference with Dropout. Thus we've already measured a substantial improvement between Dropout CFNNs and their average weight equivalents.
This suggests that the method itself is not the problem, but either the hypernetwork model is a not very good candidate for CFNNs, or its training needs to be different than standard SGD.
These questions could be worthwhile to explore in future work.

\section{Dropout CFNNs}
\label{appendix:dropout_details}
We use ResNet-20 as our model, incorporating a Dropout layer with $p=0.5$ after every other activation layer and once before the last linear layer. We use the same training hyperparameters as described in Appendix \ref{appendix:hypernet_details}. The first layer has 16 channels, with the number of channels doubled on every block as usual.
We observe in Figure \ref{fig:dropout:delta} that the model has learned a highly random behaviour, with a very horizontal delta function on CIFAR100.
As described in the ablation study in Appendix \ref{appendix:hypernet_ablations}, removing the contribution of randomness is the same as training with amplification and using the expected mask of the Dropout layer during inference, which is the standard inference method with Dropout.
This was already done in Table \ref{tbl:dropout_results}.

\begin{figure*}[b]
\vskip 0.2in
\begin{center}
    \begin{subfigure}[b]{0.35\textwidth}
         \centerline{\includegraphics[width=7cm]{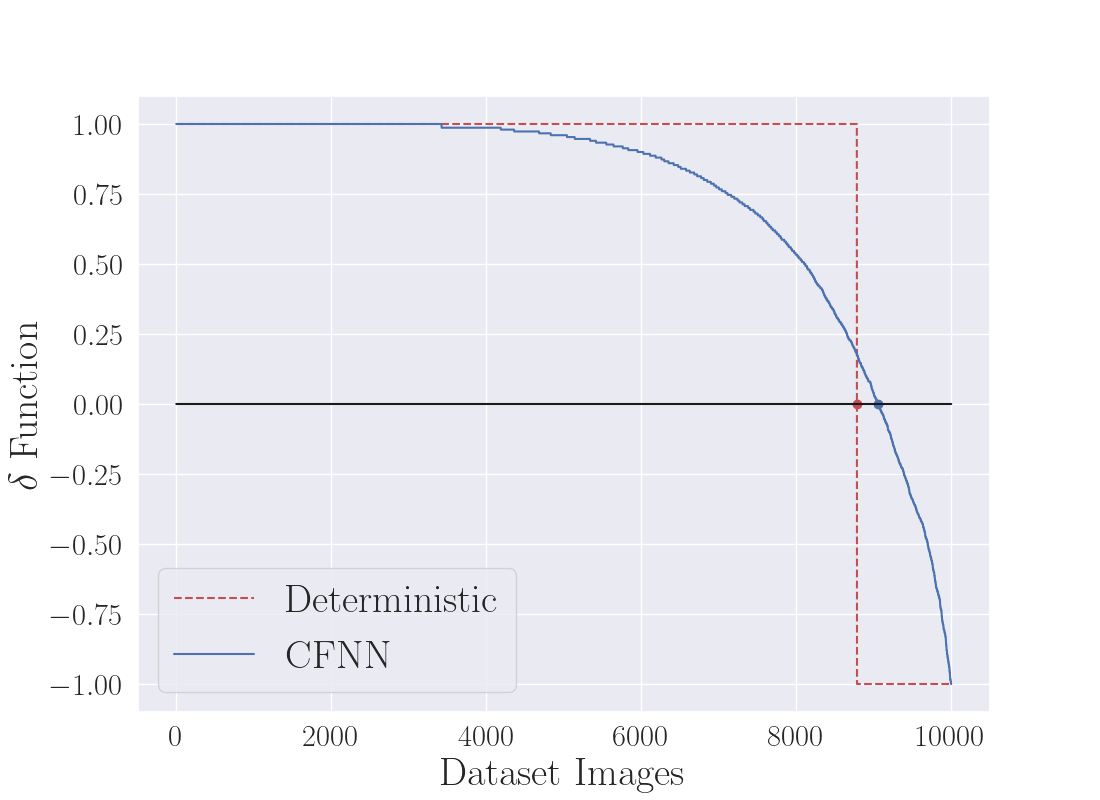}}
        \caption{}
        \label{fig:dropout:delta:cifar10}
    \end{subfigure}
    \begin{subfigure}[b]{0.35\textwidth}\centerline{\includegraphics[width=7cm]{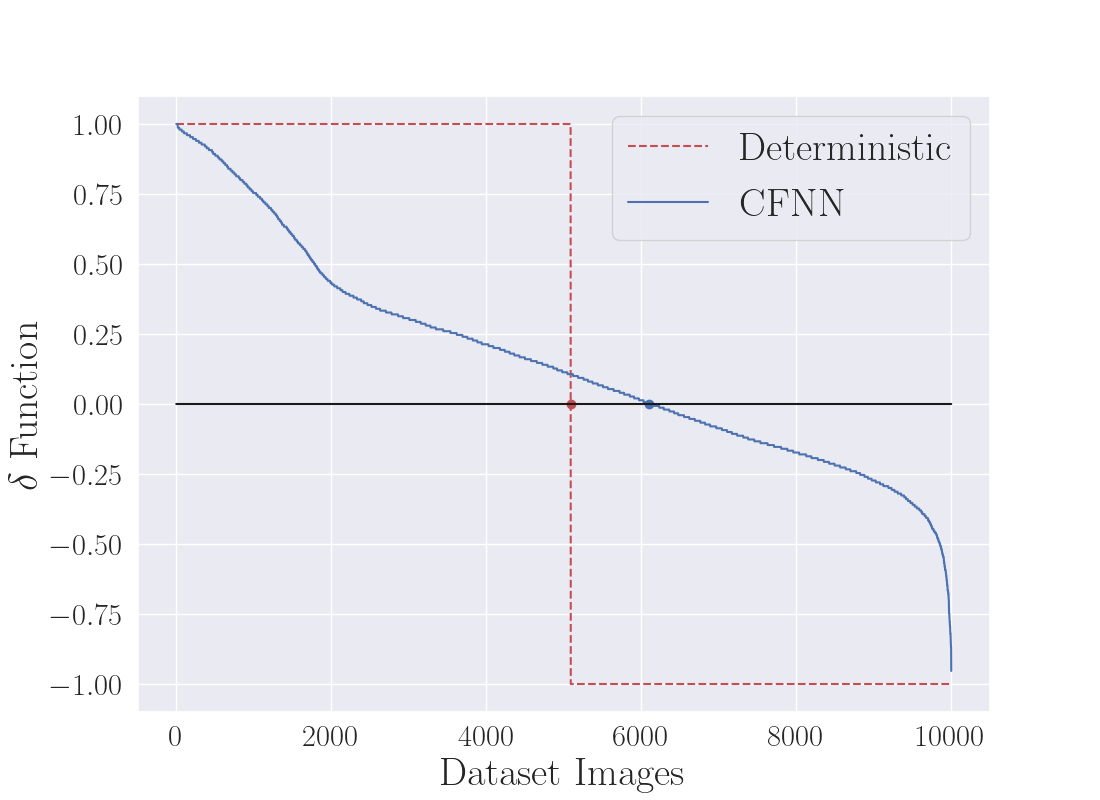}}
        \caption{}
        \label{fig:dropout:delta:cifar100}
    \end{subfigure}
    \caption{\small (a) The $\hat\delta(x)$ function as defined in Appendix \ref{appendix:delta} for our Dropout CFNN on CIFAR10, indicating randomness. The performance improvement over the deterministic equivalent is not large but statistically significant. (b) The $\hat\delta(x)$ function for our Dropout CFNN on CIFAR100. Again, it is clear that more classes benefited the CFNN, allowing more random behavior and higher Random Accuracy.}
    \label{fig:dropout:delta}
\end{center}
\vskip -0.2in
\end{figure*}

\label{appendix:mcdropout}
\subsection{MCDropout}

MCDropout \cite{gal2016mc-dropout} is a commonly used method which applies Dropout during inference to generate confidence intervals. Its formulation is different than ours in a few key ways, two of which are: we use majority while MCDropout calculates expectation; and we train our network with the express goal of improving accuracy using majority, while MCDropout does not address network training. 
Table \ref{tbl:dropout_results} emphasizes that the CFNNs framework can be applied to standard networks that use Dropout, and achieve substantial improvement in accuracy, while being very different from MCDropout in practice as well as in theory.

\section{Sampling Generalization - Additional Results}
\label{appendix:sampling_generalization}
Figure \ref{fig:n_samples_datasets} provide the empirical evidence for the effect of generalization sampling for our two CFNN implementations.

\begin{figure*}[t]
\vskip 0.2in
\begin{center}
    \begin{subfigure}[b]{0.35\textwidth}
         \centerline{\includegraphics[width=6cm]{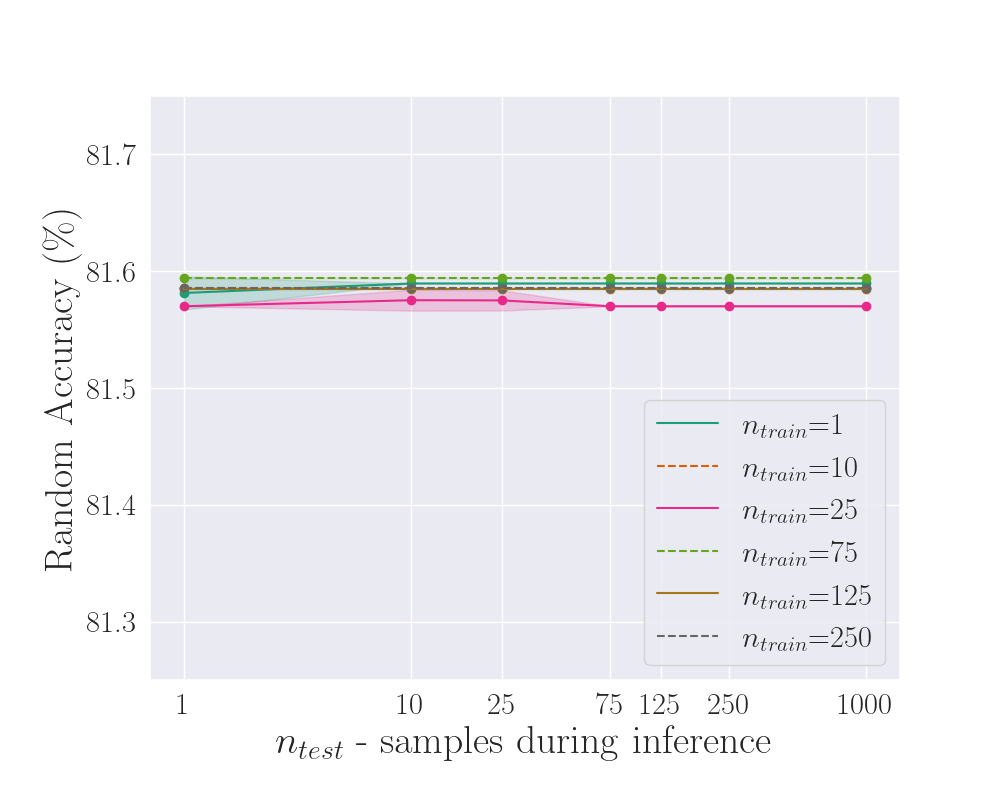}}
        \caption{}
        \label{fig:sampling_generalization:hyper_cifar10}
    \end{subfigure}
    \begin{subfigure}[b]{0.35\textwidth}
         \centerline{\includegraphics[width=6cm]{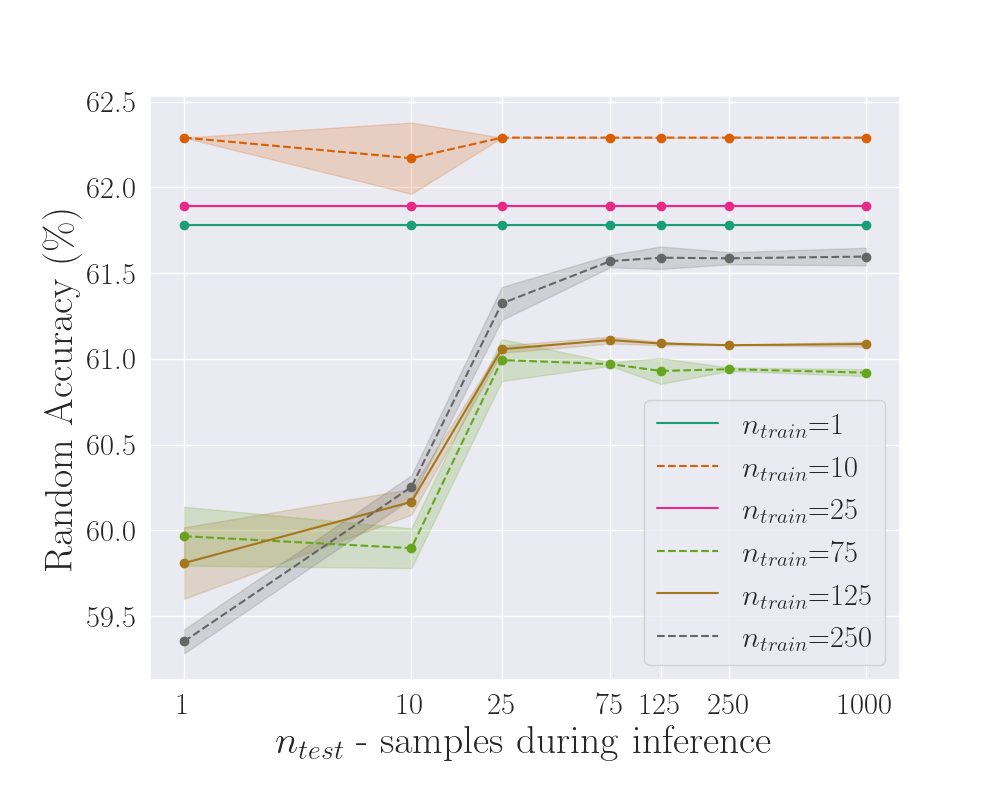}}
        \caption{}
        \label{fig:sampling_generalization:hyper_cifar100}
    \end{subfigure}
    \begin{subfigure}[b]{0.35\textwidth}
         \centerline{\includegraphics[width=6cm]{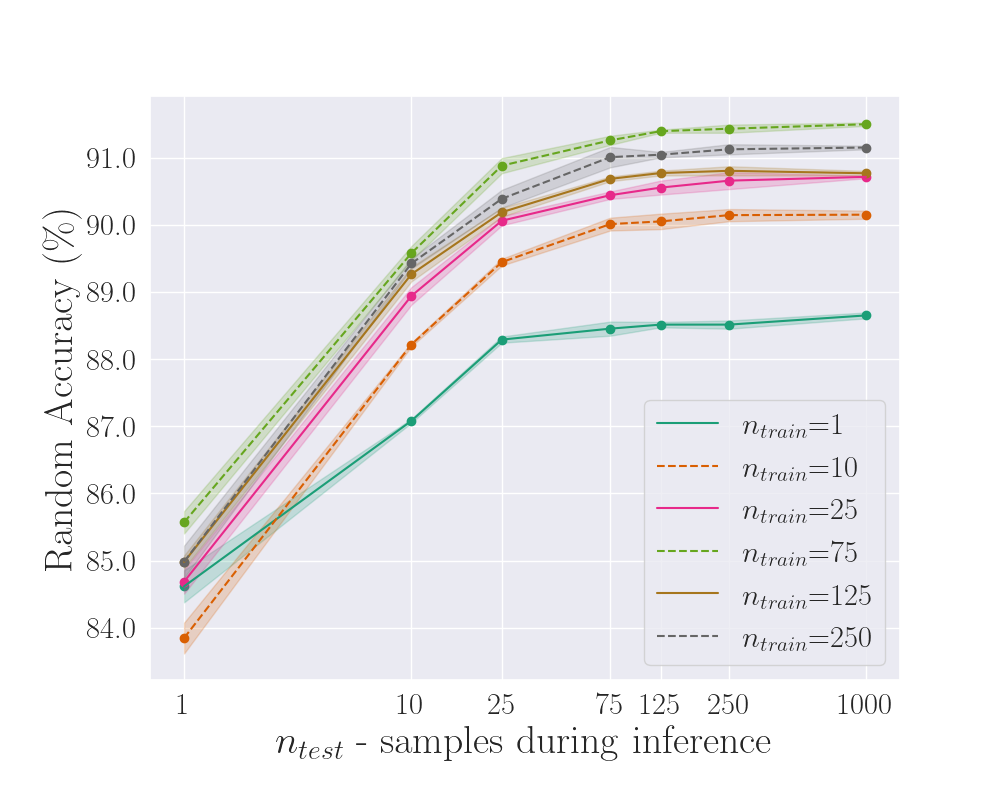}}
        \caption{}
        \label{fig:sampling_generalization:dropout_cifar10}
    \end{subfigure}
    \begin{subfigure}[b]{0.35\textwidth}
         \centerline{\includegraphics[width=6cm]{figures/sample generalization/n_dropout_cifar100_img.png}}
        \caption{}
        \label{fig:sampling_generalization:dropout_cifar100}
    \end{subfigure}
    \caption{\small Sampling Generalization for different CFNN architectures and datasets. Dropout CFNNs demonstrated this ability on both datasets, while our hypernetwork has shown it on CIFAR100 only, which is expected due to the model's centralized behavior on CIFAR10, as can be seen in Figure \ref{fig:hypernet:delta:cifar10}. Shaded areas indicate confidence bounds of 1 standard deviation from the expected value. (a) Hypernetwork CFNN on CIFAR10, with $D=6$. (b) Hypernetwork CFNN on CIFAR100, with $D=10$. (c) Dropout CFNN on CIFAR10. (d) Dropout CFNN on CIFAR100.}
    \label{fig:n_samples_datasets}
\end{center}
\vskip -0.2in
\end{figure*}

\end{document}